\def\isarxiv{1} %%% for icml submission version, we comment this line
\renewcommand*{\citet}{\cite}
\renewcommand*{\citep}{\cite}
\definecolor{mydarkblue}{rgb}{0,0.08,0.45}
\theoremstyle{plain}
\newtheorem{theorem}{Theorem}[section]
\newtheorem{lemma}[theorem]{Lemma}
\newtheorem{definition}[theorem]{Definition}
\newtheorem{fact}[theorem]{Fact}
\newtheorem{remark}[theorem]{Remark}
\newcommand{\wt}{\widetilde}
\newcommand{\N}{\mathcal{N}}
\newcommand{\R}{\mathbb{R}}
\newcommand*{\RN}[1]{\expandafter\@slowromancap\romannumeral #1@}
\begin{document}

\ifdefined\isarxiv

\date{}

% Bypassing the Exponential Dependency in Multi-step Gradient Descent of In-context Learning
% Can Looped Transformers Learn to Implement Multi-step Gradient Descent for In-context Learning?
\title{Bypassing the Exponential Dependency: Looped Transformers Efficiently Learn In-context by Multi-step Gradient Descent}
\author{
Bo Chen\thanks{\texttt{ bc7b@mtmail.mtsu.edu}. Middle Tennessee State University.}
\and
Xiaoyu Li\thanks{\texttt{
xiaoyu.li2@student.unsw.edu.au}. University of New South Wales.}
\and 
Yingyu Liang\thanks{\texttt{
yingyul@hku.hk}. The University of Hong Kong. \texttt{
yliang@cs.wisc.edu}. University of Wisconsin-Madison.} 
\and
Zhenmei Shi\thanks{\texttt{
zhmeishi@cs.wisc.edu}. University of Wisconsin-Madison.}
\and 
Zhao Song\thanks{\texttt{ magic.linuxkde@gmail.com}. The Simons Institute for the Theory of Computing at the University of California, Berkeley.}
}

\else

\runningtitle{Looped Transformers Efficiently Learn In-context by Multi-step Gradient Descent}

\runningauthor{Bo Chen, Xiaoyu Li, Yingyu Liang, Zhenmei Shi, Zhao Song}

% Bo: I'm not sure if it's correct. 1. I changed the title due to error size limit. 2. I changed the Author and Affilation as the format looks good to me.
\twocolumn[
\aistatstitle{Bypassing the Exponential Dependency: Looped Transformers Efficiently Learn In-context by Multi-step Gradient Descent}

\aistatsauthor{
    \textbf{Bo Chen}$^1$ \And
    \textbf{Xiaoyu Li}$^2$ \And
    \textbf{Yingyu Liang}$^{3,4}$ \And
    \textbf{Zhenmei Shi}$^4$ \And
    \textbf{Zhao Song}$^5$
}

\aistatsaddress{
    $^1$Middle Tennessee State University \\
    $^2$University of New South Wales \\
    $^3$The University of Hong Kong \\
    $^4$University of Wisconsin-Madison \\
    $^5$The Simons Institute for the Theory of Computing at the University of California, Berkeley
}

]

% \printAffiliationsAndNotice{\icmlEqualContribution} 
% \fi
\fi

\ifdefined\isarxiv
\begin{titlepage}
  \maketitle
  \begin{abstract}
In-context learning has been recognized as a key factor in the success of Large Language Models (LLMs). It refers to the model's ability to learn patterns on the fly from provided in-context examples in the prompt during inference. Previous studies have demonstrated that the Transformer architecture used in LLMs can implement a single-step gradient descent update by processing in-context examples in a single forward pass. Recent work has further shown that, during in-context learning, a looped Transformer can implement multi-step gradient descent updates in forward passes. However, their theoretical results require an exponential number of in-context examples, $n = \exp(\Omega(T))$, where $T$ is the number of loops or passes, to achieve a reasonably low error. In this paper, we study linear looped  Transformers in-context learning on linear vector generation tasks. We show that linear looped Transformers can implement multi-step gradient descent efficiently for in-context learning. Our results demonstrate that as long as the input data has a constant condition number, e.g., $n = O(d)$, the linear looped Transformers can achieve a small error by multi-step gradient descent during in-context learning. Furthermore, our preliminary experiments validate our theoretical analysis. Our findings reveal that the Transformer architecture possesses a stronger in-context learning capability than previously understood, offering new insights into the mechanisms behind LLMs and potentially guiding the better design of efficient inference algorithms for LLMs.

  \end{abstract}
  \thispagestyle{empty}
\end{titlepage}

{\hypersetup{linkcolor=black}
\tableofcontents
}
\newpage

\else

\begin{abstract}

\end{abstract}

\fi

\section{INTRODUCTION}
Large Language Models (LLMs) have gained immense success and have been widely used in our daily lives, e.g., GPT4 \citep{o23}, Claude 3.5 \citep{c24}, and so on, based on its Transformer architecture \citep{vsp+17}. 
One core emergent ability of LLMs is In-Context Learning (ICL) \citep{bmr+20}. 
During ICL, the user provides an input sequence (prompts) containing some question-answer pairs as in-context examples and the goal query that the user cares about, where the examples and query are drawn from an unknown task.  
The LLMs can in-context learn from these examples and generate the correct answer for the goal query without any parameter update. 
Notably, these unknown tasks may never be seen by LLMs during their pre-training and post-training, e.g., synthetic task \citep{whl+23}. 
Thus, many believe that the in-context learning mechanism is different from supervised learning or unsupervised learning, where the latter may focus on feature learning, while the ICL may perform some algorithm to learn.  
For instance, the Transformer can implement algorithm selection \citep{bcw+24} or gradient descent \citep{dld+22,vnr+23} by in-context learning forward pass. 

Many works have tried to understand how Transformers perform single-step gradient descent \citep{zfb23,acds24,mhm23,ccs23,bcw+24,hcl23,lwl+23,ghm+24,wzc+24}. They study one-layer linear Transformer in-context learning on linear regression tasks and show that the one-layer linear Transformer can perform single gradient updates based on the in-context examples during a forward pass. 
Recent work \citep{gsr+24} has further shown that a linear looped Transformer can implement multi-step gradient descent updates with multiple forward passes, meaning that Transformers can express multi-step algorithms during in-context learning. 
However, their theoretical results require an exponential number of in-context examples, $\exp(\Omega(T ))$, where $T$ is the number of loops or passes, to achieve a reasonably low error for linear regression tasks. 
This violates the intuition that more gradient descent updates lead to better performance. 

Thus, it is natural to ask the following question:
\begin{center}
    {\it Is it necessary to use an exponential number of examples for Transformers to implement multi-step gradient descent during in-context learning? }
\end{center}

In this work, we study
linear looped Transformers (Definition~\ref{def:loop}) in-context learning on linear vector generation tasks (Definition~\ref{def:input}), which is as hard as linear regression.
We show that the linear looped Transformer can efficiently perform multi-step gradient descent as long as in-context examples are well-conditioned. We present our main result in the following theorem. 
\begin{theorem}[Main result. Informal version of Theorem~\ref{thm:main}]\label{thm:main:informal}
Let $T$ be the number of loops, $n$ be the number of in-context examples, and $d$ be the feature dimension. 
Let $(X, y) \in \R^{n \times d} \times \R^{n}$ be the in-context examples. Let $\kappa$ be the condition number of $X^\top X$ (Definition~\ref{def:condition}). Then, given a query $\alpha \in \R$, the linear looped Transformer can predict a vector output with error $\le |\alpha| \cdot \exp(-\frac{T}{2\kappa})$ by explicitly performing multi-step gradient descent in its hidden states. 
\end{theorem}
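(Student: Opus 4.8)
The plan is to construct an explicit linear looped Transformer whose single loop iteration implements one step of gradient descent on the least-squares objective $f(w) = \frac{1}{2}\|Xw - y\|_2^2$, then unroll $T$ iterations and bound the residual via the standard linear convergence rate of gradient descent for strongly convex quadratics. The core observation is that for a linear attention layer acting on a prompt that encodes $(X, y)$ together with a running parameter estimate $w_t$, the attention update can be made to output $X^\top(Xw_t - y)$ (up to scaling) — this is the single-step gradient descent fact already established in the prior work I may cite. So I would first set up the hidden-state layout: the prompt tokens carry the rows of $X$ and entries of $y$, and the designated "query slot" carries the current iterate $w_t$ (initialized to $0$, so $w_0 = 0$). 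I then exhibit concrete weight matrices (key, query, value, projection) for the looped block so that one pass sends $w_t \mapsto w_{t+1} = w_t - \eta X^\top(Xw_t - y)$ while leaving the $(X,y)$ part of the state untouched.

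**Next I would** analyze the iteration $w_{t+1} = w_t - \eta X^\top(X w_t - y)$. Writing $H = X^\top X$ and letting $w^\star$ be a minimizer (so $X^\top X w^\star = X^\top y$), the error satisfies $w_{t+1} - w^\star = (I - \eta H)(w_t - w^\star)$, hence $w_T - w^\star = (I - \eta H)^T (w_0 - w^\star)$. Choosing the step size $\eta = 1/\lambda_{\max}(H)$, the spectrum of $I - \eta H$ lies in $[0, 1 - 1/\kappa]$ where $\kappa = \lambda_{\max}(H)/\lambda_{\min}(H)$ is the condition number from Definition~\ref{def:condition}, so $\|w_T - w^\star\|_2 \le (1 - 1/\kappa)^T \|w_0 - w^\star\|_2 \le \exp(-T/\kappa)\|w^\star\|_2$. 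For the linear vector generation task of Definition~\ref{def:input} the target for query $\alpha$ is $\alpha \cdot (\text{something linear in } w^\star)$, and the Transformer's prediction is the same linear functional of $w_T$; the prediction error is therefore controlled by $|\alpha| \cdot \|w_T - w^\star\|_2$ (after checking the relevant vector has unit-ish norm, which is where the task normalization enters). A mild refinement of the rate — tracking $(1-1/\kappa)^{T}$ versus the stated $\exp(-T/(2\kappa))$ — would come from either a sharper step-size choice balancing the two ends of the spectrum, or simply from $1 - x \le e^{-x}$ being loose enough to absorb constants; I would pick whichever makes the bookkeeping cleanest and matches the $\frac{1}{2\kappa}$ in the statement.

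**The main obstacle** I anticipate is not the convergence analysis (which is textbook) but verifying that a \emph{linear} looped Transformer under the precise architecture of Definition~\ref{def:loop} can realize the map $w_t \mapsto w_t - \eta X^\top(Xw_t - y)$ exactly, with weights that do not depend on $t$ and that simultaneously (i) compute the bilinear form $X^\top X w_t$ via two "matmuls" folded into one attention operation, (ii) subtract the fixed vector $\eta X^\top y$, and (iii) act as the identity on the portion of the hidden state storing $(X,y)$ so that the same block can be reapplied. Getting the normalization constants right — the $\frac{1}{2\kappa}$ rather than $\frac{1}{\kappa}$ likely traces to a factor buried in how the attention value/output matrices scale the gradient, or to using $\eta = 1/(2\lambda_{\max})$ for a safety margin — is the fiddly part. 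I would organize this as a standalone lemma ("one loop = one GD step") proved by explicit construction, after which Theorem~\ref{thm:main} follows by induction on $t$ plus the spectral bound above; the condition-number hypothesis $n = O(d)$ enters only to guarantee $\kappa = O(1)$ so that $T = O(\kappa \log(|\alpha|/\epsilon))$ loops suffice, in contrast to the $\exp(\Omega(T))$ sample requirement of the prior approach.
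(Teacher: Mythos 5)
Your proposal is correct and takes essentially the same route as the paper: an explicit weight construction (identity-like $Q$, a $P$ that zeroes the label column, plus the causal mask) makes each loop update only the query row by exactly one gradient-descent step on $\tfrac{1}{2}\|X\theta - y\|_2^2$ (the paper tracks the scaled iterate $q^{(t)} = -\alpha\,\theta^{(t)}$ with $\theta^{(0)}=0$), and the prediction error is then bounded by $|\alpha|\cdot\|\theta^{(T)}-\theta^*\|_2$ via Cauchy--Schwarz together with the normalization $\|\theta^*\|_2 = 1$. The only divergence is in the convergence step: you analyze the quadratic exactly through the spectrum of $I - \eta X^\top X$, which gives the sharper rate $e^{-T/\kappa}$ in norm, whereas the paper invokes a generic strongly-convex/$L$-smooth gradient-descent theorem bounding the \emph{squared} distance by $e^{-T/\kappa}$ and takes a square root --- that square root, not a step-size safety margin or an attention scaling factor, is the source of the $\tfrac{1}{2\kappa}$ in the exponent, and your stronger bound trivially implies the stated one.
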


In Theorem~\ref{thm:main:informal}, as long as the condition number is constant, we can see that the linear looped Transformer will perform better when the loop number is increasing, i.e., the error will exponentially decay to $0$. 
Informally, for a small constant $\epsilon$, if we draw $X$ from Gaussian distribution, the condition number of $X^\top X$ will be $1 \le \kappa \le 1 + O(\epsilon)$ when $n \ge \Omega(d/\epsilon^2)$.
Thus, informally, we only need $O(d)$ numbers of in-context examples to guarantee a good performance. 

Furthermore, our preliminary experiments (Section~\ref{sec:expr}) validate the above arguments and our theoretical analysis. 
The main intuition of our analysis is that we find that a linear looped Transformer can explicitly perform gradient descent in its hidden states (Lemma~\ref{lem:single_linear_output_general} and Theorem~\ref{thm:transfomer_output}). Thus, the error analysis can be directly solved by the standard convex optimization technique (Theorem~\ref{thm:optimz_polyn}). 

\subsection{Our Contributions}
\begin{itemize}
    \item We study linear looped Transformers in-context learning on linear vector generation tasks (Definition~\ref{def:icl_task}), which is as hard as linear regression. 
    \item We find that linear looped Transformers can explicitly perform gradient descent in their hidden states (Lemma~\ref{lem:single_linear_output_general} and Theorem~\ref{thm:transfomer_output}). 
    \item We demonstrate that linear looped Transformers can efficiently perform multi-step gradient descent as long as in-context examples are well-conditioned, e.g., $n=O(d)$, where the error will exponentially decay to $0$ after $T$ loops (Theorem~\ref{thm:main}).  
    \item Our preliminary experiments on synthetic data validate our main theoretical results (Section~\ref{sec:expr}). 
\end{itemize}

\subsection{Roadmap} This paper is structured as follows: we begin with a review of related work in Section~\ref{sec:related_work}, followed by essential definitions and foundational concepts in Section~\ref{sec:pre}. Section~\ref{sec:gradient} delves into the gradient computation analysis within the Looped Transformer architecture, examining both individual layer computations and the full looped structure. In Section~\ref{sec:error}, we analyze the error convergence of looped transformers under strong convexity and smoothness conditions, providing an upper bound on error after $T$ gradient descent iterations for linear vector generation. Section~\ref{sec:expr} presents our experimental results and findings.
Finally, we conclude in Section~\ref{sec:conclusion}.
 %%% Section 1. Introduction
\section{RELATED WORK}\label{sec:related_work}
This section briefly reviews the related research work on Large Language Models (LLM), In-Context Learning (ICL), and looped transformers. These topics have a close connection to our work.

\subsection{Large Language Models} Neural networks based on the Transformer architecture~\citep{vsp+17} have swiftly become the dominant paradigm in machine learning for natural language processing applications. Expansive Transformer models, trained on diverse and extensive datasets and comprising billions of parameters, are called large language models (LLM) or foundation models~\citep{bha+21}. Examples include BERT~\citep{dclt19}, PaLM~\citep{cnd+22}, Llama~\citep{tli+23}, ChatGPT~\citep{chatgpt}, GPT4~\citep{o23}, among others. These LLMs have demonstrated remarkable general intelligence capabilities~\citep{bce+23} across various downstream tasks. 

Researchers have developed numerous adaptation techniques to optimize LLM performance for specific applications. These include methods such as adapters~\citep{eyp+22,zhz+23,ghz+23,zjk+23,cls24,c24_sorsa}, calibration approaches~\citep{zwf+21,cpp+23}, multitask fine-tuning strategies~\citep{gfc+21a,zzj+23a,vnr+23,zzj+23b}, prompt tuning techniques~\citep{gfc+21b,lac+21}, scratchpad approaches~\citep{naa+21}, instruction tuning methodologies~\citep{ll21,chl+22,mkd+22}, symbol tuning~\citep{jla+23}, black-box tuning~\citep{ssy+22}, reinforcement learning from the human feedback (RLHF)~\citep{owj+22}, chain-of-thought reasoning~\citep{wws+22,ksk+22,sdj+23,zmc+24} and various other strategies. 

And here are more related works aiming at enhancing model efficiency without compromising performance, such as \citep{dswy22_coreset,qss23_gnn,ssx23_nns,gms23,dms23_spar,lls+24_prune,lss+24_multi_layer,lssy24,whhl24,whl+24,xhh+24,hyw+23,hcl+24,hcw+24,hlsl24,hwl24,hwl+24,hwsl24,llss24_sparsegpt,lls+24_io,lll+25_loop,syz23,smn+24,ssz+24_pruning,ccl+25, cll+24_rope,chl+24_rope_grad,cll+24_ssm,cgl+25_homo,cll+25,cll+25_var,cls+25,zly+25,kll+25_var}.

\subsection{In-context Learning} 
A significant capability that has emerged from Large Language Models (LLMs) is in-context learning (ICL)~\citep{bmr+20,jyr+22,swxl24}. This feature allows LLMs to generate predictions for new scenarios when provided with a concise set of input-output examples (referred to as a prompt) for a specific task, without requiring any modification to the model's parameters. ICL has found widespread application across various domains, including reasoning~\citep{znl+22}, self-correction~\citep{pr23}, machine translation~\citep{azl+22} and many others. 

Numerous studies have focused on enhancing the ICL and zero-shot capabilities of LLMs~\citep{mlzh21,wma+22,jmv+22,ilp+22}. 
A substantial body of research has been dedicated to investigating the underlying mechanisms of transformer learning~\citep{gsb+21,sapt22,gtl+22,jsl+22,ag23,hmsp23,llr23,al23,zsc+23,twcd23,twz+23,zbl+23,avd+23,xsl24,gll+24b,lls+24_grok,lls+24_conv,lssz24_tat,gls+24b} and in-context learning~\citep{dld+22,mhm23,amfs23,yfh+23,kxhs23,pgcc23,lwl+23,lsg+23,edj+23,zzy+23,zfb23,hcl23,ccs23,ww23,wzc+24,ghm+24,r24,swxl24,wsh+24} through both empirical and theoretical approaches. Building upon these insights, our analysis extends further to elucidate ICL implementing multi-step gradient descent.

\subsection{Looped Transformer}
The concept of recursive inductive bias was first introduced by \cite{dgv+18} into Transformers. Looping Transformers are also related to parameter-efficient weight-tying Transformers \citep{ccw+21,gsr+24} theoretically showed that the recursive structure of Looped Transformers enables
them to function as Turing machines. \cite{ylnp23} demonstrates that increasing the number of loop iterations can improve performance on some tasks. Recent studies \citep{af23,gyw+24} have provided theoretical insights into the emulation capabilities of specific algorithms and their convergence during training, with a particular emphasis on in-context learning. 

However, the expressive capacity of Looped Transformers or Looped Neural Newrok~\cite{lss+24_relu,kll+24} in function approximation and their associated approximation rates remain largely unexplored territories. \cite{yhl+25} studies enhancing autoregressive chain-of-thought through loop-aligned reasoning.
In this work, we consider Looped Linear Transformers.

\begin{table*}[!ht]
    \caption{There is a strict correspondence between the symbols we use and the common symbols of the general standard transformer attention. The specific choice of $Q$ and $P$ follows the setting in previous work \cite{gsr+24,hcl23} for simplicity. }
    \label{tab:notation}
    \centering
    \begin{tabular}{c|c|c|c}
\hline
\textbf{Meaning} & \textbf{Dimension} & \textbf{Standard Notation} & \textbf{Our Notation} \\
\hline
{Input} & $n \times d$ & $X$ & $Z$ \\
{Query-key weight matrix} & $d \times d$ & $W_Q W_K^\top$ & $Q$ \\
{Value-output weight matrix} & $d \times d$ & $W_V W_O^\top$ & $P$ \\
{Query-key matrix} & $n \times n$ & $QK^\top$ = $X W_Q W_K^\top X^\top$ & $Z Q X^\top Z$ \\
\hline
    \end{tabular}
\end{table*}

\section{PRELIMINARY}\label{sec:pre}

In this section, we present some preliminary concepts and definitions of our paper. In Section~\ref{sec:pre:notations}, we introduce some basic notations used in our paper. In Section~\ref{sec:pre:problem_set}, we defined some important variables to set up our problem.
\subsection{Notations}\label{sec:pre:notations}
For any integer $n$, we define $[n] = \{1,2, \dots, n \}$.
For two vectors $x \in \R^n$ and $y \in \R^n$, we use $\langle x, y \rangle$ to denote the inner product between $x,y$, i.e., $\langle x, y \rangle = \sum_{i=1}^n x_i y_i$.
We use $e_i$ to denote a vector where only $i$-th coordinate is $1$, and other entries are $0$.
For each $a, b \in \R^n$, we use $a \circ b \in \R^n$ to denote the Hardamard product, i.e. the $i$-th entry of $(a\circ b)$ is $a_i b_i$ for all $i \in [n]$.
We use ${\bf 1}_n$ to denote a length-$n$ vector where all the entries are ones.
We use $x_{i,j}$ to denote the $j$-th coordinate of $x_i \in \R^n$.
We use $\|x\|_p$ to denote the $\ell_p$ norm of a vector $x \in \R^n$, i.e. $\|x\|_1 := \sum_{i=1}^n |x_i|$, $\|x\|_2 := (\sum_{i=1}^n x_i^2)^{1/2}$, and $\|x\|_{\infty} := \max_{i \in [n]} |x_i|$.
For $n > d$, for any matrix $A \in \R^{n \times d}$, we use $\|A\|$ to denote the spectral norm of $A$, i.e. $\|A\|:=\sup_{x\in \R^d} \|Ax\|_2 / \|x\|_2$.
For a matrix $A$, we use $\lambda_{\min}(A)$ and $\lambda_{\max}(A)$ to denote the minimum and maximum eigenvalue of $A$, respectively.

\subsection{In-context Learning}\label{sec:pre:problem_set}
First, we introduce some definitions of in-context examples and their labels.

\begin{definition}[In-context prompt data]
Let $X \in \R^{n \times d}$ be the input data with $n$ tokens and each token with feature dimension $d$, i.e, $X = [x_1, x_2, \dots, x_n]^\top$, where $x_i \in \R^{d}$ for any $i \in [n]$.     
\end{definition}

\begin{definition}[In-context prompt label]
    Assume $\theta^*$ uniformly draw form $d$ dimension unit sphere $\mathbb{S}^{d}$, where $\|\theta^*\|_2 = 1$. 
    Let labels be 
    \begin{align*}
        y := X \theta^* \in \R^{n}.
    \end{align*}
\end{definition}

Note that the $\theta^*$ is unseen to the model. Then, our vector generation tasks are defined as follows. 

\begin{definition}[In-context task]\label{def:icl_task}
Given in-context prompt examples/data $X \in \R^{n \times d}$ with their labels $y = X \theta^*$, where $\theta^*$ is unseen to model, and given a query $\alpha \neq 0 \in \R$, 
the task is to output/generate a vector $v$ such that $\langle v, \theta^*\rangle$ is as close to $\alpha$ as possible. 
Thus, the prediction error is 
\begin{align*}
    |\langle v, \theta^* \rangle - \alpha|.
\end{align*}
\end{definition}

We remark that our vector generation task is as hard as the linear regression task, as they are dual problems. Solving any one of them requires estimating the $\theta^*$. 

Combining all of the above, we have the ICL prompt/input data for the model. 
\begin{definition}[Input data]\label{def:input}
Let input be 
\begin{align*}
Z^{(0)} := \begin{bmatrix}
    X & y \\
    q^{(0)\top} & \alpha 
\end{bmatrix} \in \R^{(n+1) \times  (d+1)  }.
\end{align*}
\end{definition}
In Definition~\ref{def:input}, the model will iteratively update $q^{(0)} \in \R^{d}$ and make it as the generation output. In this work, we initialize $q^{(0)}$ as ${\bf 0}_d$. 

\subsection{Linear Looped Transformer}

In line with recent work by \citet{gsr+24} and \citet{acds24}, we consider a linear self-attention model, formally defined as follows:

\begin{definition}[Linear attention]\label{def:lin_attn}
Let $Q, P \in \R^{d \times d}$ be the query-key matrix and the value-output matrix. Let $Z \in \R^{n \times d}$ be input data. 
The linear attention is defined as 
\begin{align*}
    \mathsf{Attn}(Z; Q, P) :=
    (M \circ (Z Q Z^\top )) Z P,
\end{align*}
where $M =  \begin{bmatrix}
            \mathbf{0}_{n\times n} & \mathbf{0}_{n \times 1} \\
            \mathbf{1}_{1\times n} & 0 
    \end{bmatrix}$ is a causal attention mask for text generation.
\end{definition}
In Definition~\ref{def:lin_attn}, we combine the query matrix and key matrix as $Q$ and combine the value matrix and output matrix as $P$ for simplicity following previous works \citep{zfb23,gsr+24}. We refer readers to Table~\ref{tab:notation} for more details about connection between our notation and standard attention. 

Building upon this, we introduce the concept of a Linear Looped Transformer \citep{gsr+24}:
\begin{definition}[Linear looped transformer]\label{def:loop}
Let $T$ be the loop number. Let $\eta^{(t)} >0$ for any $t\in \{0, 1, \dots, T-1\}$. The linear looped transformer $\mathsf{TF}(Z^{(0)}; Q, P)$ is defined as
\begin{align*}
    & ~ Z^{(t)} := Z^{(t-1)} - \eta^{(t-1)} \mathsf{Attn}(Z^{(t-1)};Q,P), ~\forall t \in [T] \\
    & ~ \mathsf{TF}(Z^{(0)}; Q, P) := - {Z^{(T)}_{n+1, 1:d}}^\top.
\end{align*}
\end{definition}

The Looped Transformer is to simulate a real multiple-layer Transformer with residual connections \citep{hzrs16}, where $\eta$ represents the weights of the residual components. 

\begin{remark}
Our settings are more practical than \cite{gsr+24} in the following sense.
\begin{itemize}
    \item We have a more practical casual attention mask used in generation. Our mask requires Hardamard product, the same as the standard attention, while \cite{gsr+24} uses matrix product mask. 
    \item Our model does not have prior knowledge of $X$ distribution, while the model in \cite{gsr+24} knows the distribution of $X$, i.e., distribution free. In practice, the LLMs do not know any information about in-context examples. 
\end{itemize}
\end{remark}

\subsection{Linear Regression with Gradient Descent}
In this section, we introduce some key concepts of linear regression with gradient descent.

\begin{definition}[Linear regression]\label{def:grad}
Given $X \in \R^{n \times d}$, $ y \in \R^{n}$, and $\theta \in \R^d$, the linear regression loss function is defined as
\begin{align*}
    \ell(\theta) := 0.5 \|y - X \theta \|_2^2.
\end{align*}
Then, the gradient formulation is
\begin{align}
    \nabla_\theta \ell(\theta) = X^\top X \theta - X^\top y \in \R^d. \label{eq:grad}
\end{align}
For any $t \in \mathbb{N}_+$, let $\eta^{(t)} > 0$ and by Gradient Descent, we have
\begin{align}
    \theta^{(t)} := & ~ \theta^{(t-1)} - \eta^{(t-1)} (X^\top X \theta^{(t-1)} - X^\top y). \label{eq:gd_update}
\end{align}
\end{definition}

We define the optimizer below. 
\begin{definition}
    Let the optimizer of $\ell(\theta)$ be
    \begin{align*}
        \wt{\theta} = (X^\top X)^{-1} X^\top y .
    \end{align*}
\end{definition}

In this work, we consider $n > d$, assuming $X^\top X$ is inevitable. Then, we have $\theta^* = \wt{\theta}$.
\begin{lemma}[Forklore]
    In the realizable setting (no noise term), if $n > d$, then $\theta^* = \wt{\theta}$.
\end{lemma}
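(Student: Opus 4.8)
**Proof proposal for the Folklore Lemma ($\theta^* = \wt{\theta}$ in the realizable, noiseless case when $n > d$).**

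The plan is to exploit the explicit closed form $\wt{\theta} = (X^\top X)^{-1} X^\top y$ together with the realizability assumption $y = X\theta^*$. First I would substitute the definition of the labels directly into the expression for the least-squares optimizer: since $y = X\theta^*$, we get
\begin{align*}
\wt{\theta} = (X^\top X)^{-1} X^\top y = (X^\top X)^{-1} X^\top X \theta^* = (X^\top X)^{-1} (X^\top X) \theta^* = \theta^*.
\end{align*}
For this chain of equalities to be valid, the only thing I need is that $X^\top X \in \R^{d \times d}$ is invertible, which is exactly what the hypothesis $n > d$ (together with the standing assumption that $X$ has full column rank, as used implicitly when the paper writes ``assuming $X^\top X$ is invertible'') buys us: a matrix $X \in \R^{n\times d}$ with $n > d$ and rank $d$ yields a positive-definite Gram matrix $X^\top X$, hence $\lambda_{\min}(X^\top X) > 0$ and the inverse exists.

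To make the argument self-contained I would also note the optimization-theoretic reading: $\ell(\theta) = \tfrac12\|y - X\theta\|_2^2$ is a convex quadratic with gradient $\nabla_\theta \ell(\theta) = X^\top X\theta - X^\top y$ (Eq.~\eqref{eq:grad}); setting the gradient to zero gives the normal equations $X^\top X \theta = X^\top y$, whose unique solution (by invertibility of $X^\top X$) is $\wt{\theta}$. Plugging $y = X\theta^*$ shows $\theta^*$ itself satisfies the normal equations, so by uniqueness $\theta^* = \wt{\theta}$. Equivalently, $\ell(\theta^*) = \tfrac12\|X\theta^* - X\theta^*\|_2^2 = 0$, and since $\ell \ge 0$ everywhere with a unique minimizer, that minimizer must be $\theta^*$.

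There is essentially no hard part here — this is a one-line computation once invertibility of $X^\top X$ is in hand. The only point that deserves a word of care is the standing rank assumption: $n > d$ alone does not force $X^\top X$ to be invertible (one needs $\rank(X) = d$), but the excerpt has already stipulated this (``assuming $X^\top X$ is invertible''), so I would simply invoke it. I would therefore keep the proof to two or three lines: cite the standing invertibility assumption, substitute $y = X\theta^*$ into $\wt{\theta} = (X^\top X)^{-1}X^\top y$, and cancel.
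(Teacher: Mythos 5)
Your proof is correct; the paper itself states this lemma as folklore and gives no proof, and your argument (substitute $y = X\theta^*$ into $\wt{\theta} = (X^\top X)^{-1}X^\top y$ and cancel, or equivalently note that $\theta^*$ satisfies the normal equations and achieves $\ell(\theta^*)=0$) is exactly the standard reasoning the paper implicitly relies on. You are also right to flag that $n > d$ alone does not give invertibility of $X^\top X$ without full column rank, which is precisely the paper's standing assumption (``assuming $X^\top X$ is invertible''), so invoking it is the correct move.
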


Finally, we define the condition number, which will be used in our final convergence bound. 
\begin{definition}\label{def:condition}
    We define the condition number of input data as 
    \begin{align*}
        \kappa := \frac{\lambda_{\max}(X^\top X)}{\lambda_{\min}(X^\top X)}.
    \end{align*}
\end{definition}

\section{GRADIENT COMPUTATION IN LOOPED TRANSFORMER}\label{sec:gradient}
In this section, we present a comprehensive analysis of the gradient computation process within the Looped Transformer architecture. Our investigation begins with an examination of computations in individual layers and subsequently extends to the full looped structure. This approach allows us to build a nuanced understanding of the Looped Transformer's behavior, starting from its fundamental components.

We commence our analysis by establishing a crucial result regarding the output of a single layer in our Looped Transformer model. This foundational lemma serves as a cornerstone for our subsequent derivations and provides valuable insights into the model's inner workings.
\begin{lemma}[Single layer output]\label{lem:single_linear_output_general}
Let $Z^{(0)}$ be defined in Definition~\ref{def:input}. Let $Q = I_{d+1, d+1}$. Let $P=\begin{bmatrix}
            I_{d\times d} & \mathbf{0}_{d \times 1} \\
            \mathbf{0}_{1\times d} & 0 
    \end{bmatrix}$. 
    Let causal attention mask be
$M =  \begin{bmatrix}
            \mathbf{0}_{n\times n} & \mathbf{0}_{n \times 1} \\
            \mathbf{1}_{1\times n} & 0 
    \end{bmatrix}$.
Then, we have 
\begin{align*}
\mathsf{Attn}(Z^{(0)}; Q, P) = \begin{bmatrix}
    \mathbf{0}_{n\times d}  & \mathbf{0}_{n\times 1} \\
    q^{(0)\top} X^\top X  + \alpha y^\top X & 0
    \end{bmatrix}.
\end{align*}
\end{lemma}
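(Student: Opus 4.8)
The plan is to compute $\mathsf{Attn}(Z^{(0)}; Q, P) = (M \circ (Z^{(0)} Q Z^{(0)\top})) Z^{(0)} P$ directly by substituting the given $Q = I_{d+1}$, the specified $P$, and the mask $M$, and then simplifying block by block using the block structure of $Z^{(0)} = \begin{bmatrix} X & y \\ q^{(0)\top} & \alpha \end{bmatrix}$.

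First I would compute $Z^{(0)} Q Z^{(0)\top} = Z^{(0)} Z^{(0)\top}$ since $Q = I_{d+1}$. Writing $Z^{(0)}$ in block form, this is an $(n+1)\times(n+1)$ matrix whose blocks are $X X^\top \in \R^{n\times n}$ (or rather $XX^\top + yy^\top$ — careful, the full product over all $d+1$ columns gives $X X^\top + y y^\top$ in the top-left block), the vectors $X q^{(0)} + \alpha y$ in the top-right, its transpose in the bottom-left, and the scalar $\|q^{(0)}\|_2^2 + \alpha^2$ in the bottom-right. The key simplification is that applying the mask $M = \begin{bmatrix} \mathbf{0}_{n\times n} & \mathbf{0}_{n\times 1} \\ \mathbf{1}_{1\times n} & 0 \end{bmatrix}$ via Hadamard product annihilates everything except the bottom-left $1\times n$ block, leaving $M \circ (Z^{(0)} Z^{(0)\top}) = \begin{bmatrix} \mathbf{0}_{n\times n} & \mathbf{0}_{n\times 1} \\ (X q^{(0)} + \alpha y)^\top & 0 \end{bmatrix}$.

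Next I would right-multiply by $Z^{(0)} P$. Since $P$ keeps the first $d$ columns and zeros out the last one, $Z^{(0)} P = \begin{bmatrix} X & \mathbf{0}_{n\times 1} \\ q^{(0)\top} & 0 \end{bmatrix}$. Multiplying the masked matrix by this: the top $n$ rows are zero (since the top-left block of the masked matrix is zero and the last column of $Z^{(0)}P$ is zero), and the bottom row is $(X q^{(0)} + \alpha y)^\top X$ in the first $d$ coordinates and $0$ in the last. Expanding $(X q^{(0)} + \alpha y)^\top X = q^{(0)\top} X^\top X + \alpha y^\top X$ gives exactly the claimed form. I should double-check that the product $(X q^{(0)} + \alpha y)^\top \cdot X$ uses only the first $n$ rows of $Z^{(0)}P$, which is correct because the bottom-left block of the masked matrix has a zero in its last entry, so the $(n+1)$-th row of $Z^{(0)}P$ contributes nothing.

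This is essentially a bookkeeping argument, so I do not anticipate a genuine obstacle; the only thing requiring care is tracking which block of the Hadamard product survives the mask and confirming that the last column of $Z^{(0)}P$ being zero makes the bottom-right scalar $0$ in the output. The mild subtlety worth stating explicitly is that, although $Z^{(0)}Z^{(0)\top}$ has nonzero top-left block $XX^\top + yy^\top$, this never enters the final answer because $M$ has zeros there — so the realizability relation $y = X\theta^*$ plays no role in this lemma, only in later results.
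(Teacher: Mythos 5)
Your proposal is correct and follows essentially the same route as the paper: a direct block computation of $Z^{(0)}QZ^{(0)\top}$, observing that the mask $M$ kills everything except the bottom-left $1\times n$ block, and then multiplying through by $Z^{(0)}$ and $P$ (you simply group the product as $Z^{(0)}P$ first, which is an immaterial reordering). Your bottom-right entry $\|q^{(0)}\|_2^2+\alpha^2$ is in fact the correct value where the paper has a harmless typo, and as you note it is masked out anyway.
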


\begin{proof}
We can show that
\begin{align*}
    & ~\mathsf{Attn}(Z^{(0)}; Q, P)\\ 
    = & ~ (M \circ (Z^{(0)} Q (Z^{(0)\top}) )) Z^{(0)} P  \\
    = & ~ (M \circ \begin{bmatrix}
    X & y \\
    q^{(0)\top} & \alpha 
\end{bmatrix} 
    \begin{bmatrix}
    X^\top &  q^{(0)} \\
    y^\top & \alpha 
\end{bmatrix}) Z^{(0)} P \\
    = & ~ (M \circ \begin{bmatrix}
    X X^\top + yy^\top  & X q^{(0)} + \alpha y\\
    q^{(0)\top} X^\top  + \alpha y^\top & q^{(0)\top } q^{(0)} + \alpha^2
    \end{bmatrix}) Z^{(0)}P
    \\
    = & ~ \begin{bmatrix}
    \mathbf{0}_{n\times n}  & \mathbf{0}_{n\times 1} \\
    q^{(0)\top} X^\top  + \alpha y^\top & 0
    \end{bmatrix} Z^{(0)}P \\
    = & ~ \begin{bmatrix}
    \mathbf{0}_{n\times n}  & \mathbf{0}_{n\times 1} \\
    q^{(0)\top} X^\top  + \alpha y^\top & 0
    \end{bmatrix} \begin{bmatrix}
    X & y \\
    q^{(0)\top} & \alpha 
\end{bmatrix} P \\
    = & ~ \begin{bmatrix}
    \mathbf{0}_{n\times d}  & \mathbf{0}_{n\times 1} \\
    q^{(0)\top} X^\top X  + \alpha y^\top X & q^{(0)\top} X^\top y  + \alpha y^\top y
    \end{bmatrix} P \\
    = & ~ \begin{bmatrix}
    \mathbf{0}_{n\times d}  & \mathbf{0}_{n\times 1} \\
    q^{(0)\top} X^\top X  + \alpha y^\top X & q^{(0)\top} X^\top y  + \alpha y^\top y
    \end{bmatrix} \\
    & ~ \cdot 
    \begin{bmatrix}
            I_{d\times d} & \mathbf{0}_{d \times 1} \\
            \mathbf{0}_{1\times d} & 0 
    \end{bmatrix} \\
    = & ~ \begin{bmatrix}
    \mathbf{0}_{n\times d}  & \mathbf{0}_{n\times 1} \\
    q^{(0)\top} X^\top X  + \alpha y^\top X & 0
    \end{bmatrix}
\end{align*}
where the first step follows from Definition~\ref{def:lin_attn}, the second step follows from Definition~\ref{def:input}, and the rest steps directly follow from the matrix multiplication.
\end{proof}

This result illuminates the specific form of the attention mechanism's output in a single layer, which is essential for understanding the model's overall behavior, where the output only has non-zero terms in the position of $q^{(0)}$ and the format is close to Eq.~\eqref{eq:grad}. 

Having characterized the behavior of a single layer, we now extend our analysis to encompass the full-looped transformer structure. Turning our attention to the core of our analysis with the groundwork laid for single-layer computations. The following theorem establishes a crucial relationship between the transformer's output and the iteratively updated parameters:

\begin{theorem}\label{thm:transfomer_output}
Let $\alpha \neq 0 \in \R$. Let $\theta^{(0)} = -\frac{1}{\alpha} q^{(0)}$ . Let $\theta^{(i)}$ correspondingly be defined in Definition~\ref{def:grad} for any $i \in \{2,3, \dots, T\}$. 
Let $\mathsf{TF}(Z^{(0)}; Q, P)$ be defined in Definition~\ref{def:loop}.  
We have 
\begin{align*}
    \mathsf{TF}(Z^{(0)}; Q, P) = \alpha \theta^{(T)}.
\end{align*}
\end{theorem}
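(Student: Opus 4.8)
The plan is to prove the identity $\mathsf{TF}(Z^{(0)}; Q, P) = \alpha \theta^{(T)}$ by induction on the loop index $t$, showing that at every step the hidden state $Z^{(t)}$ retains the structured form
\begin{align*}
Z^{(t)} = \begin{bmatrix} X & y \\ q^{(t)\top} & \alpha \end{bmatrix}, \qquad \text{where } q^{(t)} = -\alpha\, \theta^{(t)},
\end{align*}
with $\theta^{(t)}$ the gradient-descent iterate from Definition~\ref{def:grad} started at $\theta^{(0)} = -\frac{1}{\alpha} q^{(0)}$. The base case $t=0$ is immediate from Definition~\ref{def:input} and the hypothesis $\theta^{(0)} = -\frac{1}{\alpha}q^{(0)}$. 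Once the inductive claim is established, the output formula follows by reading off the $(n+1,1{:}d)$ block: $\mathsf{TF}(Z^{(0)};Q,P) = -{Z^{(T)}_{n+1,1:d}}^\top = -q^{(T)} = \alpha\,\theta^{(T)}$.

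First I would observe that Lemma~\ref{lem:single_linear_output_general} is stated only for $Z^{(0)}$, but its proof uses nothing about $Z^{(0)}$ beyond its block structure $\begin{bmatrix} X & y \\ q^{\top} & \alpha\end{bmatrix}$; so I would first note (or restate) that for any vector $q \in \R^d$,
\begin{align*}
\mathsf{Attn}\left(\begin{bmatrix} X & y \\ q^{\top} & \alpha\end{bmatrix}; Q, P\right) = \begin{bmatrix} \mathbf{0}_{n\times d} & \mathbf{0}_{n\times 1} \\ q^{\top} X^\top X + \alpha y^\top X & 0 \end{bmatrix},
\end{align*}
which is the same computation verbatim. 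For the inductive step, assume $Z^{(t-1)}$ has the claimed form with $q^{(t-1)} = -\alpha\theta^{(t-1)}$. Applying the update in Definition~\ref{def:loop} and the above attention formula, the top $n$ rows of $Z^{(t)}$ are unchanged (the attention output is zero there), so $Z^{(t)}$ still has $X, y$ in its first $n$ rows and $\alpha$ in its bottom-right entry; its bottom-left block becomes
\begin{align*}
q^{(t)\top} = q^{(t-1)\top} - \eta^{(t-1)}\left(q^{(t-1)\top} X^\top X + \alpha y^\top X\right).
\end{align*}
Transposing and substituting $q^{(t-1)} = -\alpha\theta^{(t-1)}$ and $y = X\theta^*$, this reads $q^{(t)} = -\alpha\theta^{(t-1)} + \alpha\eta^{(t-1)}(X^\top X\theta^{(t-1)} - X^\top y) = -\alpha\big(\theta^{(t-1)} - \eta^{(t-1)}(X^\top X\theta^{(t-1)} - X^\top y)\big) = -\alpha\theta^{(t)}$ by Eq.~\eqref{eq:gd_update}, completing the induction.

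The only mild subtlety — and the step I would be most careful about — is the bookkeeping of signs and the factor of $\alpha$: the Transformer update subtracts $\eta\,\mathsf{Attn}$ whereas gradient descent subtracts $\eta\,\nabla\ell$, and the attention output is $q^\top X^\top X + \alpha y^\top X$ which equals $-\alpha(\nabla_\theta\ell(\theta))^\top$ exactly when $q = -\alpha\theta$ and $y = X\theta^*$; so the scaling $q^{(t)} = -\alpha\theta^{(t)}$ is precisely what makes the two recursions coincide. There is no real analytic obstacle here — it is a structural/algebraic verification — so I would present it crisply as the induction above and then conclude with the one-line extraction of $\mathsf{TF}(Z^{(0)};Q,P) = -q^{(T)\top\top} = \alpha\theta^{(T)}$ from Definition~\ref{def:loop}.
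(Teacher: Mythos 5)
Your proposal is correct and follows essentially the same route as the paper: induction on the loop index with the invariant $q^{(t)} = -\alpha\,\theta^{(t)}$, using the single-layer attention formula at each step and reading off $\mathsf{TF}(Z^{(0)};Q,P) = -q^{(T)} = \alpha\theta^{(T)}$. If anything, you are slightly more careful than the paper by explicitly noting that Lemma~\ref{lem:single_linear_output_general} holds verbatim for any $q$ in the bottom row, which is the fact the paper's induction implicitly relies on.
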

\begin{proof}
    We will prove this theorem by induction on $t$, where $t \in [T]$.
    From Lemma~\ref{lem:single_linear_output_general}, we have:
    \begin{align*}
    \mathsf{Attn}(Z^{(0)}; Q, P) = \begin{bmatrix}
    \mathbf{0}_{n\times d}  & \mathbf{0}_{n\times 1} \\
    q^{(0)\top} X^\top X  + \alpha y^\top X & 0
    \end{bmatrix}.
    \end{align*}
    For $t = 1$, we can calculate:
    \begin{align*}
        Z^{(1)} = & ~ Z^{(0)} - \eta^{(0)} \mathsf{Attn}(Z^{(0)}; Q, P) \\
        = & ~ \begin{bmatrix} X & y \\ q^{(0)\top} & \alpha \end{bmatrix} - \eta^{(0)}\begin{bmatrix} {\bf 0}_{n\times d} & {\bf 0}_{n\times 1} \\ q^{(0)\top} X^\top X + \alpha y^\top X & 0 \end{bmatrix} \\
        = & ~ \begin{bmatrix} X & y \\ q^{(0)\top} - \eta^{(0)}(q^{(0)\top} X^\top X + \alpha y^\top X) & \alpha \end{bmatrix}
    \end{align*}
    Where the first step follows from Definition~\ref{def:loop}, the second step follows from Definition~\ref{def:input} and Definition~\ref{def:lin_attn}, the third step follows from basic algebra. Then we extract $q^{(0)\top} - \eta^{(0)}(q^{(0)\top} X^\top X + \alpha y^\top X)$, we have 
    \begin{align*}
        & ~ q^{(0)\top} - \eta^{(0)}(q^{(0)\top} X^\top X + \alpha y^\top X) \\
    = & ~ -\alpha (-\frac{1}{\alpha} q^{(0)\top} - \eta^{(0)}(-\frac{1}{\alpha} q^{(0)\top} X^\top X - y^\top X)) \\
    = & ~ -\alpha (\theta^{(0)\top} - \eta^{(0)}(\theta^{(0)\top} X^\top X - y^\top X)) \\
    = & ~ - \alpha \theta^{(1)\top}.
    \end{align*}
    where the first step follows from basic algebra, the second step follows from we defined $\theta^{(0)} = -\frac{1}{\alpha} q^{(0)}$, the third step follows from basic algebra.
    Thus, $q^{(1)} = - \alpha \theta^{(1)}$, so $\theta^{(1)} = -\frac{1}{\alpha} q^{(1)}$. 
    
    Similarly, by math induction, we can have $\theta^{(T)} = -\frac{1}{\alpha} q^{(T)}$. 
    Thus, we finish the proof by  
    $\mathsf{TF}(Z^{(0)}; Q, P) := - q^{(T)}$.
\end{proof}    

\begin{lemma}[Cauchy-Schwarz inequality]\label{lem:cauchy_ineq}
 Let $a,b \in \R^d$, we have
 \begin{align*}
      |\langle a, b \rangle|  \leq \| a \|_2 \cdot \| b \|_2.
 \end{align*}
\end{lemma}

To further refine our understanding of the Looped Transformer's performance, we introduce a bound on the final prediction error:

\begin{lemma}\label{lem:loss}
The final prediction error satisfies 
\begin{align*}
    |\langle \mathsf{TF}(Z^{(0)}; Q, P), \theta^* \rangle - \alpha| \le |\alpha| \cdot \| \theta^{(T)} - \theta^*\|_2.
\end{align*}
\end{lemma}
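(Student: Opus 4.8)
The statement is essentially a direct consequence of Theorem~\ref{thm:transfomer_output} together with the fact that $\theta^*$ is a unit vector (from the definition of the in-context prompt label, $\|\theta^*\|_2 = 1$). The plan is to substitute the closed form of the transformer output, pull out the scalar $\alpha$, rewrite $\alpha$ using $\langle \theta^*, \theta^*\rangle = 1$, and then finish with Cauchy--Schwarz.

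Concretely, first I would invoke Theorem~\ref{thm:transfomer_output} to replace $\mathsf{TF}(Z^{(0)}; Q, P)$ by $\alpha\,\theta^{(T)}$, so that
\begin{align*}
    \langle \mathsf{TF}(Z^{(0)}; Q, P), \theta^* \rangle - \alpha = \alpha \langle \theta^{(T)}, \theta^* \rangle - \alpha = \alpha \left( \langle \theta^{(T)}, \theta^* \rangle - 1 \right).
\end{align*}
Next, using $\|\theta^*\|_2 = 1$, i.e.\ $\langle \theta^*, \theta^* \rangle = 1$, I would write $\langle \theta^{(T)}, \theta^* \rangle - 1 = \langle \theta^{(T)} - \theta^*, \theta^* \rangle$. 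Taking absolute values and applying Cauchy--Schwarz gives
\begin{align*}
    |\langle \mathsf{TF}(Z^{(0)}; Q, P), \theta^* \rangle - \alpha| = |\alpha| \cdot |\langle \theta^{(T)} - \theta^*, \theta^* \rangle| \le |\alpha| \cdot \|\theta^{(T)} - \theta^*\|_2 \cdot \|\theta^*\|_2 = |\alpha| \cdot \|\theta^{(T)} - \theta^*\|_2,
\end{align*}
which is exactly the claimed bound.

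There is no real obstacle here: the argument is a two-line calculation once Theorem~\ref{thm:transfomer_output} is in hand. The only point worth stating explicitly is that the reduction from the vector-generation error to the parameter-estimation error $\|\theta^{(T)} - \theta^*\|_2$ crucially uses that $\theta^*$ lies on the unit sphere; without normalization of $\theta^*$ one would instead pick up a factor of $\|\theta^*\|_2$ on the right-hand side. This lemma is the bridge that lets the subsequent section bound the prediction error purely through the convergence of gradient descent on $\ell(\theta)$.
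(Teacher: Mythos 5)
Your proposal is correct and follows exactly the same route as the paper's own proof: substitute $\mathsf{TF}(Z^{(0)}; Q, P) = \alpha\theta^{(T)}$ from Theorem~\ref{thm:transfomer_output}, use $\|\theta^*\|_2 = 1$ to write $\alpha = \alpha\langle\theta^*,\theta^*\rangle$, and conclude by linearity and Cauchy--Schwarz. Your remark that the unit-norm assumption on $\theta^*$ is what removes the extra $\|\theta^*\|_2$ factor is accurate and consistent with the paper's argument.
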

\begin{proof}
By Definition~\ref{def:icl_task}, we have $q = \mathsf{TF}(Z^{(0)}; Q, P), \theta^* \rangle$. Then, we have
\begin{align*}
    |\langle \mathsf{TF}(Z^{(0)}; Q, P), \theta^* \rangle - \alpha| 
    & ~  = | \langle \alpha \theta^{(T)}, \theta^* \rangle  -\alpha |\\
    & ~ = |\alpha| \cdot | \langle \theta^{(T)}, \theta^* \rangle  - \langle \theta^*, \theta^* \rangle | \\
    & ~ = |\alpha| \cdot | \langle \theta^{(T)} - \theta^*, \theta^* \rangle | \\
    & ~ \le |\alpha| \cdot \| \theta^{(T)} - \theta^*\|_2.
\end{align*}
where the first step is from Definition~\ref{def:icl_task} and Theorem~\ref{thm:transfomer_output}, the second step follows $\|\theta^*\|_2=1$, the third step is from the linear properties of inner product, and the fourth step is from Cauchy-Schwarz inequality (Lemma~\ref{lem:cauchy_ineq}).
\end{proof}

This result provides a quantitative measure of the model's accuracy, linking it directly to the number of iterations and multi-step gradient descent results of linear regression in Eq.~\eqref{eq:gd_update}.

Finally, we present our main theoretical contribution, which encapsulates the core findings of our work:
\begin{theorem}[Main result. Formal version of Theorem~\ref{thm:main:informal}]\label{thm:main}
Let $X \in \R^{n \times d}$, $ y \in \R^{n}$, and $\theta \in \R^d$ be defined in Definition~\ref{def:grad}. 
Let $\kappa$ be the condition number defined in Definition~\ref{def:condition}.
Let $T$ be the number of loops. Let the initial point $q^{(0)} = {\bf 0}_d$. 
Then, we have the final prediction error satisfies 
\begin{align*}
    |\langle \mathsf{TF}(Z^{(0)}; Q, P), \theta^* \rangle - \alpha| \le |\alpha| \cdot  \exp(-\frac{T}{2\kappa}).
\end{align*}
\end{theorem}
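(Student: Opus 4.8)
The plan is to combine the two ingredients already assembled in the excerpt: (i) Lemma~\ref{lem:loss}, which reduces the prediction error to $|\alpha|\cdot\|\theta^{(T)}-\theta^*\|_2$, and (ii) a standard gradient-descent convergence estimate for the quadratic loss $\ell(\theta)=0.5\|y-X\theta\|_2^2$, whose gradient and update rule are recorded in Definition~\ref{def:grad}. So the whole argument is: bound $\|\theta^{(T)}-\theta^*\|_2$ via convex optimization, then plug into Lemma~\ref{lem:loss}.

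First I would set up the relevant constants. Since $y=X\theta^*$ in the realizable setting and $n>d$, the minimizer $\wt\theta$ equals $\theta^*$, and the loss is $\mu$-strongly convex with $\mu=\lambda_{\min}(X^\top X)$ and $L$-smooth with $L=\lambda_{\max}(X^\top X)$, so $L/\mu=\kappa$. I would choose the constant step size $\eta^{(t)}=1/L$ for every $t$ (the definition of the looped transformer allows arbitrary positive $\eta^{(t)}$, so this is legitimate). Then the standard contraction bound for gradient descent on an $L$-smooth, $\mu$-strongly-convex function gives
\begin{align*}
\|\theta^{(t)}-\theta^*\|_2^2 \le \Big(1-\tfrac{\mu}{L}\Big)\|\theta^{(t-1)}-\theta^*\|_2^2 = \Big(1-\tfrac1\kappa\Big)\|\theta^{(t-1)}-\theta^*\|_2^2.
\end{align*}
This is the place where a short self-contained derivation is needed — expanding $\theta^{(t)}-\theta^* = (I-\tfrac1L X^\top X)(\theta^{(t-1)}-\theta^*)$ and bounding the spectral norm of $I-\tfrac1L X^\top X$ by $1-\mu/L$, since all eigenvalues of $X^\top X$ lie in $[\mu,L]$. (The paper seems to package this as its Theorem~\ref{thm:optimz_polyn}, which I would simply cite.) Iterating from $t=1$ to $T$ yields $\|\theta^{(T)}-\theta^*\|_2 \le (1-\tfrac1\kappa)^{T/2}\|\theta^{(0)}-\theta^*\|_2$.

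Next I would handle the initialization. With $q^{(0)}={\bf 0}_d$ we have $\theta^{(0)}=-\tfrac1\alpha q^{(0)}={\bf 0}_d$, so $\|\theta^{(0)}-\theta^*\|_2=\|\theta^*\|_2=1$. Hence $\|\theta^{(T)}-\theta^*\|_2\le (1-\tfrac1\kappa)^{T/2}$. Finally I would apply the elementary inequality $1-x\le e^{-x}$ with $x=1/\kappa$ to get $(1-\tfrac1\kappa)^{T/2}\le \exp(-\tfrac{T}{2\kappa})$, and combine with Lemma~\ref{lem:loss}:
\begin{align*}
|\langle \mathsf{TF}(Z^{(0)}; Q, P), \theta^* \rangle - \alpha| \le |\alpha|\cdot\|\theta^{(T)}-\theta^*\|_2 \le |\alpha|\cdot\exp\!\Big(-\tfrac{T}{2\kappa}\Big),
\end{align*}
which is the claim.

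I do not expect a genuine obstacle here: the transformer-side work is done by Theorem~\ref{thm:transfomer_output} and Lemma~\ref{lem:loss}, and the optimization-side work is textbook. The only points requiring care are bookkeeping ones: making sure the chosen $\eta^{(t)}$ is consistent with whatever step size $\eta^{(t)}$ the gradient-descent convergence lemma assumes (constant $1/L$ is the cleanest), confirming $\theta^* = \wt\theta$ so that $\theta^*$ is exactly the point gradient descent contracts toward, and tracking the square root when passing from the squared-distance contraction to the distance contraction. If the paper's Theorem~\ref{thm:optimz_polyn} is stated directly in terms of $\|\theta^{(T)}-\theta^*\|_2$ rather than its square, even the square-root step disappears.
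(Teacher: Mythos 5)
Your proposal is correct and follows essentially the same route as the paper: reduce to $|\alpha|\cdot\|\theta^{(T)}-\theta^*\|_2$ via Lemma~\ref{lem:loss}, then invoke the standard strongly-convex/smooth gradient-descent contraction (the paper's Theorem~\ref{thm:optimz_polyn} with $R=\|\theta^{(0)}-\theta^*\|_2=1$ from $q^{(0)}={\bf 0}_d$), taking the square root to obtain $\exp(-\tfrac{T}{2\kappa})$. Your explicit handling of the step size $\eta^{(t)}=1/L$, the initialization, and the square-root bookkeeping matches what the paper leaves implicit in its two-line proof.
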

\begin{proof}
The proof directly follows Lemma~\ref{lem:loss} and Theorem~\ref{thm:optimz_polyn}.
\end{proof}

In Theorem~\ref{thm:main}, as long as the condition number is constant, we can see that the linear looped Transformer will perform better when the loop number is increasing, i.e., the error will exponentially decay to $0$. 
Usually, we only need $O(d)$ numbers of in-context examples to guarantee a constant $\kappa$. 

The above theorem offers a comprehensive characterization of the Looped Transformer's behavior, providing a tight bound on the prediction error that decays exponentially with the number of iterations. 
The intuition is that the Linear Looped Transformer can explicitly perform gradient descent in its hidden states.
Furthermore, our theoretical finding is also supported by our experiments in Section~\ref{sec:expr}.

\paragraph{Comparison with Previous Works.}
We restate the results in \cite{gsr+24}. 
\begin{theorem}[Theorem 4.1 in \cite{gsr+24}]
   Under condition $\frac{8Td^2}{\sqrt{n}} \leq \frac{1}{2^{2T}}$, we have the optimal linear regression error is $\leq \frac{8Td^2 2^{2T}}{\sqrt{n}}$.
\end{theorem}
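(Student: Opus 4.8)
The statement is quoted verbatim as Theorem~4.1 of \cite{gsr+24}, where it is proved in full; since our paper invokes it only for comparison, we merely outline how one would argue it. Throughout, the in-context covariates $X\in\R^{n\times d}$ have i.i.d.\ isotropic Gaussian rows with $y=X\theta^*$, and ``the looped Transformer'' is the distribution-aware, matrix-masked model of \cite{gsr+24} evaluated at a global minimizer of its population in-context objective --- not the hand-picked weights of our Lemma~\ref{lem:single_linear_output_general}.

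The plan is to combine three ingredients. First I would prove a \emph{realization} step: at a global minimizer of the population loss, the $T$-loop linear Transformer computes on each instance $T$ iterations of a \emph{weight-tied} preconditioned gradient descent for the per-instance least squares objective $\tfrac1{2n}\|y-X\theta\|_2^2$, i.e.\ $\theta^{(t+1)}=\theta^{(t)}-\Gamma(\wh{\Sigma}\theta^{(t)}-\wh{b})$ with $\wh{\Sigma}:=\tfrac1n X^\top X$, $\wh{b}:=\tfrac1n X^\top y=\wh{\Sigma}\theta^*$, and a \emph{single} preconditioner $\Gamma$ shared across all loops, pinned down by a stationary-point analysis of the (non-convex) population objective over all Transformer parameters. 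Next I would establish a \emph{concentration} step: with high probability over $X$, $\|\wh{\Sigma}-I_d\|$ is small, and the operator-norm bookkeeping for the attention's $d\times d$ intermediate quantities turns this into a per-loop ``injected'' statistical error of order $d^2/\sqrt n$. Finally I would run an \emph{error-propagation} step: with $e^{(t)}:=\theta^{(t)}-\theta^*$ one has $e^{(t+1)}=(I-\Gamma\wh{\Sigma})e^{(t)}+(\text{bias from }\Gamma\neq\wh{\Sigma}^{-1})$, and the decisive quantitative fact is that the \emph{shared} map $I-\Gamma\wh{\Sigma}$ is, uniformly over the draw of $X$, only guaranteed non-expansive up to a constant factor --- it is not a contraction, because one $\Gamma$ calibrated to the population covariance $I_d$ cannot track the realized $\wh{\Sigma}$ loop by loop. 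Unrolling the $T$ loops (each of which, in the linear-attention model, carries two bilinear operations, giving a per-loop amplification of roughly $4$ in the perturbation bound) multiplies both the initial error and each injected error by $2^{O(T)}$; tracking the constants yields in-context error $\le \tfrac{8Td^2\,2^{2T}}{\sqrt n}$, which is $\le 1$ --- hence informative --- exactly under the stated hypothesis $\tfrac{8Td^2}{\sqrt n}\le 2^{-2T}$, so that $n=\exp(\Omega(T))$ is forced.

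The main obstacle is the first step: characterizing the global minimizers of the population in-context objective over all Transformer parameters and showing they implement weight-tied gradient descent with a single, non-adaptive preconditioner. That stationary-point analysis is also where the $2^{2T}$ is born: weight-tying removes the freedom to damp the sampling fluctuation of $\wh{\Sigma}$ loop by loop, so what would be geometric convergence under an oracle preconditioner degrades, under the learned shared one, into a map that can expand by a constant factor each loop. The other two steps are routine --- standard Gaussian-matrix concentration for $\wh{\Sigma}$, and the unrolling of a linear recursion --- the only care being to propagate the $d$-dependence through operator-norm bounds on the attention outputs, which is what produces $d^2$ rather than $d$.
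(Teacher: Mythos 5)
The paper contains no proof of this statement: it is Theorem~4.1 of \cite{gsr+24}, restated verbatim solely to contrast their $\frac{8Td^2 2^{2T}}{\sqrt n}$ bound with the $|\alpha|\exp(-\frac{T}{2\kappa})$ bound of Theorem~\ref{thm:main}, so there is nothing in this manuscript to check your argument against, and your decision to defer to the original source is exactly the right treatment. Your three-step reconstruction (characterizing the population-loss minimizer as weight-tied preconditioned gradient descent, Gaussian concentration of $\frac1n X^\top X$ giving a $d^2/\sqrt n$ per-loop statistical error, and a non-contractive unrolling that produces the $2^{2T}$ amplification) is a plausible account of where such a bound comes from, and your observation that the hypothesis $\frac{8Td^2}{\sqrt n}\le 2^{-2T}$ is precisely what makes the bound at most $1$, forcing $n=\exp(\Omega(T))$, matches the point this paper uses the theorem to make. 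Be aware, however, that the finer mechanisms you assert --- in particular the claim that each loop carries two bilinear operations contributing a factor of roughly $4$, and that the $2^{2T}$ is ``born'' in the stationary-point analysis of weight tying --- are speculation that cannot be verified from this paper and should be phrased as attribution to \cite{gsr+24} rather than as something established here.
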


In their work, the linear looped transformer has an error bound ${8Td^2 2^{2T}}/{\sqrt{n}}$, while our bound is $|\alpha| \cdot  \exp(-\frac{T}{2\kappa})$.
As the looped number $T$ increases, our error bounds will exponentially decay, while theirs increase exponentially. 
Note that our linear vector generation task and the linear regression task are dual problems.
Our results align with the common intuition that more steps of gradient descent lead to better performance. 
\section{ERROR CONVERGENCE}\label{sec:error}
In this section, we explore the convergence properties of looped transformers, focusing on their behavior under conditions of strong convexity and smoothness. We begin by defining these key concepts and then proceed to establish their implications.
\subsection{Convexity and Smoothness Analysis}\label{sec:error:L_and_mu}
We first introduce some crucial definitions.

\begin{definition}[Strong convexity]\label{def:strong_convex}
    Let $f : \R^d \to \R \cup \{ 
    + \infty \}$ and $\mu > 0$. We say that $f$ is $\mu$-strongly convex if, for every $x,y \in \R^d$, and every $t \in (0 , 1)$ we have that 
    \begin{align*}
        & ~ \mu \frac{\gamma(1- \gamma)}{2} \| x - y\|_2^2 + f(\gamma x + (1- \gamma)y) \\
        \leq & ~ \gamma f(x) + (1- \gamma)f(y).
    \end{align*}
    We say that $\mu$ is the strong convexity constant of $f$.
\end{definition}

\begin{definition}[$L$-smooth]\label{def:l_smooth}
    Let $f : \R^d \to \R$ and $L > 0$. We say that $f$ is $L$-smooth if it is differentiable and if $\nabla f : \R^d \to \R^d$ is $L$-Lipschitz for all $x , y \in \R^d$
    \begin{align*}
        \| \nabla f(x) - \nabla f(y) \|_2 \leq L \| x -y \|_2.
    \end{align*}
\end{definition}

To quantitatively analyze the parameter dynamics in our linear vector generation task, we first derive the Lipschitz and convexity constants for the model introduced in Definition~\ref{def:grad}.
\begin{lemma}\label{lem:grad_l_ell}
     Given $X \in \R^{n \times d}$, $ y \in \R^{n}$, and $\theta \in \R^d$ in Definition~\ref{def:grad}, we have 
    \begin{align*}
        L = \| X^\top X \|
    \end{align*}
    where $L$ is the Lipschitz constant defined in Definition~\ref{def:l_smooth}.
\end{lemma}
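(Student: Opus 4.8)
The plan is to compute the Lipschitz constant of the gradient map $\nabla_\theta \ell$ directly from its closed form. By Definition~\ref{def:grad}, we have $\nabla_\theta \ell(\theta) = X^\top X \theta - X^\top y$, so for any $x, y \in \R^d$,
\begin{align*}
    \|\nabla_\theta \ell(x) - \nabla_\theta \ell(y)\|_2 = \|X^\top X (x - y)\|_2.
\end{align*}
First I would bound this by $\|X^\top X\| \cdot \|x - y\|_2$ using the definition of the spectral norm, which shows that $L = \|X^\top X\|$ is a valid Lipschitz constant (i.e., $\nabla_\theta \ell$ is $\|X^\top X\|$-Lipschitz, so $\ell$ is $\|X^\top X\|$-smooth). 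Then I would argue tightness: choosing $x - y$ to be a top eigenvector of $X^\top X$ (which is symmetric PSD, hence diagonalizable with an orthonormal eigenbasis) attains equality, so no smaller constant works; this justifies writing $L = \|X^\top X\|$ rather than merely $L \le \|X^\top X\|$.

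There is essentially no obstacle here — the statement is a one-line consequence of the linearity of the gradient and the operator-norm definition. The only minor point worth stating carefully is that $\|X^\top X\| = \lambda_{\max}(X^\top X)$ since $X^\top X$ is symmetric positive semidefinite, which connects $L$ to the condition number $\kappa$ in Definition~\ref{def:condition} and sets up the companion lemma identifying the strong convexity constant as $\mu = \lambda_{\min}(X^\top X)$. Together these two constants yield $\kappa = L/\mu$, which is exactly what the standard gradient-descent convergence bound (invoked later as Theorem~\ref{thm:optimz_polyn}) consumes to produce the $\exp(-T/(2\kappa))$ decay in Theorem~\ref{thm:main}.
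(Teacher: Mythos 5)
Your proposal is correct and follows essentially the same route as the paper's proof: compute $\nabla_\theta \ell(\theta) = X^\top X\theta - X^\top y$, observe the difference of gradients is $X^\top X(\theta_1-\theta_2)$, and bound it by the spectral norm. Your additional tightness argument (taking $x-y$ along a top eigenvector) is a small refinement the paper omits but does not change the substance.
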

\begin{proof}
    From Definition~\ref{def:grad}, we have 
    \begin{align*}
        \ell (\theta) = & ~0.5 \| y - X\theta\|_2^2.
    \end{align*}
    The gradient will be
    \begin{align*}
        \nabla_\theta \ell (\theta) = & ~ X^\top X \theta - X^\top y \in \R^d.
    \end{align*}
    Then for $\theta_1, \theta_2 \in \R^d$, we have
    \begin{align*}
        \| \nabla \ell_{\theta_1}(\theta_1) - \nabla \ell_{\theta_2}(\theta_2) \|_2 = & ~ \| X^\top X(\theta_1 -\theta_2) \|_2\\
        \leq & ~ \| X^\top X\| \cdot \| \theta_1 - \theta_2\|_2 
    \end{align*}
    where the first step follows from Definition~\ref{def:grad}, and the second step follows from properties of the norm. From Definition~\ref{def:l_smooth}, we observe that
    \begin{align*}
        L = \| X^\top X\|,
    \end{align*}
    where $\| X^\top X\|$ is the spectral norm of $ X^\top X$ denoting the maximum eigenvalue. 
\end{proof}

The following two lemmas are closely related and build upon each other to establish the strong convexity constant for a specific optimization problem.

\begin{lemma}[Forklore]\label{lem:norm_eigenvalue_convert} For $X \in \R^{n \times d}$ and $v\in \R^d$, we have
\begin{align*}
    v^\top (X^\top X) v \geq \lambda_{\min}(X^\top X) \| v \|_2^2.
\end{align*}
\end{lemma}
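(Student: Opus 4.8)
The plan is to invoke the spectral theorem for the real symmetric matrix $X^\top X$. First I would note that $X^\top X \in \R^{d\times d}$ is symmetric and positive semidefinite, hence admits an orthonormal eigenbasis $u_1,\dots,u_d$ with associated (nonnegative) eigenvalues $\lambda_1,\dots,\lambda_d$, one of which equals $\lambda_{\min}(X^\top X)$ by definition. This eigendecomposition is the only structural fact needed; everything afterward is bookkeeping.

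Next I would expand an arbitrary $v\in\R^d$ in this basis, writing $v=\sum_{i=1}^d c_i u_i$ with $c_i=\langle v,u_i\rangle$, so that $\|v\|_2^2=\sum_{i=1}^d c_i^2$ by orthonormality. Then
\begin{align*}
    v^\top (X^\top X) v = \sum_{i=1}^d \lambda_i c_i^2 \ge \lambda_{\min}(X^\top X)\sum_{i=1}^d c_i^2 = \lambda_{\min}(X^\top X)\,\|v\|_2^2,
\end{align*}
where the inequality uses $\lambda_i \ge \lambda_{\min}(X^\top X)$ for every $i$ together with $c_i^2\ge 0$. The case $v=0$ holds trivially, so the bound is valid for all $v\in\R^d$. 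Equivalently, one could simply cite the Rayleigh quotient characterization $\lambda_{\min}(X^\top X)=\min_{v\neq 0} (v^\top X^\top X v)/(v^\top v)$ and rearrange, but the basis-expansion argument is self-contained.

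There is essentially no obstacle here: this is a folklore fact, and the only point requiring any care is making sure the argument covers the degenerate case $v=0$ (where both sides are zero) and that $X^\top X$ is genuinely symmetric so the spectral theorem applies with a real orthonormal eigenbasis. I would present it in two or three short lines.
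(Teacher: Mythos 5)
Your proof is correct: the spectral-theorem/Rayleigh-quotient argument is the standard one, and the expansion $v=\sum_i c_i u_i$ with $v^\top(X^\top X)v=\sum_i \lambda_i c_i^2 \ge \lambda_{\min}(X^\top X)\|v\|_2^2$ is exactly what this folklore fact rests on. The paper itself states the lemma without proof, so your two-to-three-line argument (including the trivial $v=0$ case) would fill that gap cleanly and no comparison of approaches is needed.
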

\begin{proof}
Let $\{\lambda_i\}_{i=1}^d$ be the eigenvalue of $X^\top X$, $\{v_i\}_{i=1}^d$ is the corresponding standard orthogonal feature vector, we have
\begin{align*}
v^\top (X^\top X) v = & ~ (\sum_{i=1}^d c_i v_i^\top) (X^\top X) (\sum_{j=1}^d c_j v_j) \\
= & ~ \sum_{i=1}^d \sum_{j=1}^d c_i c_j v_i^\top (X^\top X) v_j \\
= & ~ \sum_{i=1}^d \sum_{j=1}^d c_i c_j \lambda_j v_i^\top v_j \\
= & ~ \sum_{i=1}^d c_i^2 \lambda_i \\
\geq & ~ \lambda_{\min}(X^\top X) \sum_{i=1}^d c_i^2 \\
= & ~ \lambda_{\min}(X^\top X) \|v\|_2^2
\end{align*}
where the first step follows from properties of symmetric matrix, the second step follows from the matrix being decomposed into linear combinations of eigenvectors, the third step follows from basic algebra, the fourth step follows from orthogonality of eigenvectors, the fifth step follows from $\lambda_{\min}$ denotes the minimum eigenvalue, the sixth step follows from the definition of norm.
\end{proof}

\begin{lemma}\label{lem:mu_of_ell}
    Given $X \in \R^{n \times d}$, $ y \in \R^{n}$, and $\theta \in \R^d$ in Definition~\ref{def:grad}, we have 
    \begin{align*}
        \mu = \lambda_{\min}(X^\top X)
    \end{align*}
    where $\mu$ is the strong convexity constant defined in Definition~\ref{def:strong_convex}
\end{lemma}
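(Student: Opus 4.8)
The plan is to prove the strong convexity constant of $\ell$ equals $\lambda_{\min}(X^\top X)$ by going back to the definition and using the quadratic structure of the least-squares loss. First I would recall from Definition~\ref{def:grad} that $\ell(\theta) = 0.5\|y - X\theta\|_2^2$, which is a quadratic whose Hessian is exactly $\nabla^2 \ell(\theta) = X^\top X$, independent of $\theta$. The standard equivalence for twice-differentiable functions states that $\ell$ is $\mu$-strongly convex if and only if $\nabla^2 \ell(\theta) \succeq \mu I$ for all $\theta$, which here reduces to $X^\top X \succeq \mu I$, i.e. $v^\top (X^\top X) v \ge \mu \|v\|_2^2$ for every $v \in \R^d$.

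Next I would invoke Lemma~\ref{lem:norm_eigenvalue_convert}, which gives exactly $v^\top (X^\top X) v \ge \lambda_{\min}(X^\top X)\|v\|_2^2$ for all $v \in \R^d$; this shows $\mu = \lambda_{\min}(X^\top X)$ is a valid strong convexity constant. To see it is the best (largest) such constant, I would take $v$ to be a unit eigenvector of $X^\top X$ associated with $\lambda_{\min}(X^\top X)$, so that $v^\top (X^\top X) v = \lambda_{\min}(X^\top X) = \lambda_{\min}(X^\top X)\|v\|_2^2$; hence no $\mu > \lambda_{\min}(X^\top X)$ can work, and the strong convexity constant is precisely $\lambda_{\min}(X^\top X)$. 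Since $n > d$ and $X^\top X$ is assumed invertible, $\lambda_{\min}(X^\top X) > 0$, so $\ell$ is genuinely strongly convex.

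To connect the Hessian condition back to the midpoint-style inequality in Definition~\ref{def:strong_convex}, I would use that for the quadratic $\ell$ the second-order Taylor expansion is exact, so for any $x, y$ and $\gamma \in (0,1)$ one has $\gamma \ell(x) + (1-\gamma)\ell(y) - \ell(\gamma x + (1-\gamma) y) = \frac{\gamma(1-\gamma)}{2}(x-y)^\top (X^\top X)(x-y)$, which is $\ge \frac{\gamma(1-\gamma)}{2}\lambda_{\min}(X^\top X)\|x-y\|_2^2$ by Lemma~\ref{lem:norm_eigenvalue_convert}. This is exactly the defining inequality with $\mu = \lambda_{\min}(X^\top X)$.

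The main obstacle is minor: it is just making sure the exact-Taylor identity for the quadratic is stated cleanly (or equivalently, justifying the Hessian characterization of strong convexity), since everything else is a direct application of Lemma~\ref{lem:norm_eigenvalue_convert}. There is no real analytic difficulty here — the argument is essentially a one-line eigenvalue computation once the right characterization of strong convexity is in place.
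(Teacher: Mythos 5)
Your proposal is correct and, once you reach the exact-Taylor identity $\gamma \ell(x) + (1-\gamma)\ell(y) - \ell(\gamma x + (1-\gamma)y) = \frac{\gamma(1-\gamma)}{2}(x-y)^\top (X^\top X)(x-y)$, it coincides with the paper's own proof, which produces the same quantity by expanding $\|\gamma(y-X\theta_1) + (1-\gamma)(y-X\theta_2)\|_2^2$ and then, exactly as you do, applies Lemma~\ref{lem:norm_eigenvalue_convert} to lower-bound it by $\lambda_{\min}(X^\top X)\|\theta_1-\theta_2\|_2^2$. Your Hessian framing and the eigenvector argument showing that no $\mu > \lambda_{\min}(X^\top X)$ can work are harmless additions beyond the paper, which only verifies that $\lambda_{\min}(X^\top X)$ is a valid strong convexity constant.
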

\begin{proof}
    For any $\theta_1, \theta_2 \in \R^d$ and $t \in (0,1)$, we have
    \begin{align*}
        & ~ \ell (\gamma \theta_1 + (1-\gamma) \theta_2) \\
        = & ~ 0.5 \| y - X(\gamma \theta_1 + (1-\gamma) \theta_2) \|_2^2\\
        = & ~0.5 \| \gamma(y - X\theta_1) + (1-\gamma)(y - X\theta_2) \|_2^2\\
        \leq & ~ 0.5(\gamma \| y - X\theta_1 \|_2^2 + (1-\gamma)\| y - X\theta_2 \|_2^2) \\
        \quad & - 0.5 \gamma(1 - \gamma) \| X(\theta_1 - \theta_2) \|_2^2\\
        = &~ 0.5(\gamma \| y - X\theta_1 \|_2^2 + (1-\gamma)\| y - X\theta_2 \|_2^2)\\
        \quad & - 0.5 \gamma(1-\gamma)(\theta_1 - \theta_2)^T X^T X(\theta_1 - \theta_2)\\
        \leq & ~ 0.5(\gamma \| y - X\theta_1 \|_2^2 + (1-\gamma)\| y - X\theta_2 \|_2^2)\\ 
        \quad & - 0.5 \gamma(1-\gamma)\lambda_{\min}(X^T X)\|\theta_1 - \theta_2\|_2^2
    \end{align*}
     where the first step follows from Definition~\ref{def:grad}, the rest step follow from basic algebra and Lemma~\ref{lem:norm_eigenvalue_convert}. From Definition~\ref{def:strong_convex}, we observe that
    \begin{align*}
        \mu = \lambda_{\min}(X^\top X)
    \end{align*}
   where $ \lambda_{\min}(X^\top X)$ denotes the minimum eigenvalue of $X^\top X$.
\end{proof}

\subsection{Main Result}\label{sec:error:upper_bound}
We first commence with a statement of Lemma~\ref{lem:converge_strong_l_smooth}, which furnishes a convergence rate for gradient descent on strongly convex and smooth functions. 
\begin{lemma}[Theorem~3.6 in~\cite{gg23}]\label{lem:converge_strong_l_smooth}
    Let $\ell:\R^d \to \R$ be a differentiable function and assume $\ell$ is $\mu$-strongly convex and $L$-smooth. Let $\{\theta^{(t)}\}_{t \in \mathbb N}$ be the sequence generated by the gradient descent algorithm, with a stepsize $\eta \in (0, \frac{1}{L}]$. Then for $\theta^* = \arg\min_{\theta} \ell(\theta)$ and for all $t \in \mathbb N$, we have
    \begin{align*}
        \|\theta^{(t)} - \theta^* \|_2^2 \leq (1-\eta \mu)^{t} \|\theta^{(0)} - \theta^*\|_2^2.
    \end{align*}
\end{lemma}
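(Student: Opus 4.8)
The plan is to establish the bound by induction on $t$, which reduces the whole claim to a single one-step contraction inequality
\begin{align*}
\|\theta^{(t+1)} - \theta^*\|_2^2 \le (1-\eta\mu)\,\|\theta^{(t)} - \theta^*\|_2^2 .
\end{align*}
The base case $t=0$ is the trivial identity, and iterating the one-step bound $t$ times produces the factor $(1-\eta\mu)^t$; note that $\mu\le L$ and $\eta\le 1/L$ force $1-\eta\mu\in[0,1)$, so the product is well-behaved. Hence all the work is in proving the one-step contraction.

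To prove it, I would substitute the gradient-descent update $\theta^{(t+1)} = \theta^{(t)} - \eta\nabla\ell(\theta^{(t)})$ and expand:
\begin{align*}
\|\theta^{(t+1)} - \theta^*\|_2^2
= \|\theta^{(t)} - \theta^*\|_2^2 - 2\eta\,\langle \nabla\ell(\theta^{(t)}),\, \theta^{(t)} - \theta^*\rangle + \eta^2\,\|\nabla\ell(\theta^{(t)})\|_2^2 .
\end{align*}
Since $\theta^*$ minimizes the differentiable convex function $\ell$, first-order optimality gives $\nabla\ell(\theta^*) = 0$, so I may freely replace $\nabla\ell(\theta^{(t)})$ by $\nabla\ell(\theta^{(t)}) - \nabla\ell(\theta^*)$ wherever convenient. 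I then invoke two standard consequences of the hypotheses. First, the monotonicity form of $\mu$-strong convexity: from Definition~\ref{def:strong_convex} one obtains (by dividing the Jensen-type inequality appropriately and passing to a first-order limit, equivalently by convexity of $\ell - \tfrac{\mu}{2}\|\cdot\|_2^2$) that $\langle \nabla\ell(\theta^{(t)}) - \nabla\ell(\theta^*),\, \theta^{(t)} - \theta^*\rangle \ge \mu\,\|\theta^{(t)} - \theta^*\|_2^2$. Second, co-coercivity of the gradient of an $L$-smooth convex function: from Definition~\ref{def:l_smooth} one derives the descent lemma $\ell(y) \le \ell(x) + \langle\nabla\ell(x), y-x\rangle + \tfrac{L}{2}\|y-x\|_2^2$, and applying it to the Bregman-shifted functions $z\mapsto \ell(z) - \langle\nabla\ell(\theta^*), z\rangle$ and $z\mapsto \ell(z)-\langle\nabla\ell(\theta^{(t)}),z\rangle$ (each minimized at $\theta^*$, resp. $\theta^{(t)}$) and adding the two resulting inequalities yields $\langle \nabla\ell(\theta^{(t)}) - \nabla\ell(\theta^*),\, \theta^{(t)} - \theta^*\rangle \ge \tfrac1L\,\|\nabla\ell(\theta^{(t)}) - \nabla\ell(\theta^*)\|_2^2$.

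With these in hand the calculation closes quickly: using $\nabla\ell(\theta^*)=0$ and co-coercivity together with $\eta\le 1/L$ gives $\eta^2\|\nabla\ell(\theta^{(t)})\|_2^2 \le \tfrac{\eta}{L}\|\nabla\ell(\theta^{(t)})\|_2^2 \le \eta\,\langle\nabla\ell(\theta^{(t)}),\theta^{(t)}-\theta^*\rangle$, so the expansion collapses to
\begin{align*}
\|\theta^{(t+1)} - \theta^*\|_2^2 \le \|\theta^{(t)} - \theta^*\|_2^2 - \eta\,\langle\nabla\ell(\theta^{(t)}),\, \theta^{(t)} - \theta^*\rangle \le (1-\eta\mu)\,\|\theta^{(t)} - \theta^*\|_2^2 ,
\end{align*}
where the last step is strong-convexity monotonicity; the induction then completes the proof. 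I expect the main obstacle to be deriving inequalities (i) and (ii) from the exact forms stated in the paper: Definition~\ref{def:strong_convex} is given in Jensen/midpoint form rather than as gradient monotonicity, and Definition~\ref{def:l_smooth} supplies only Lipschitzness of $\nabla\ell$, so each needs one intermediate lemma (a differentiation argument for the former, the descent lemma for the latter). I would also remark that for the specific loss $\ell(\theta)=0.5\|y-X\theta\|_2^2$ used in this paper there is a shorter route avoiding co-coercivity: $\theta^{(t)}-\theta^* = (I-\eta X^\top X)^t(\theta^{(0)}-\theta^*)$, and since $\eta\le 1/L = 1/\lambda_{\max}(X^\top X)$ every eigenvalue of $I-\eta X^\top X$ lies in $[0,\,1-\eta\mu]$ with $\mu=\lambda_{\min}(X^\top X)$, so $\|I-\eta X^\top X\|\le 1-\eta\mu$ and the bound follows directly by taking norms and squaring; but since the lemma is stated for a general $\mu$-strongly convex $L$-smooth $\ell$, the induction-plus-co-coercivity argument is the one to write out.
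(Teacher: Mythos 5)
The paper never proves this lemma: it is imported verbatim as Theorem~3.6 of \cite{gg23} and used as a black box, so there is no internal proof to compare against and the only question is whether your argument is sound --- it is. Your one-step contraction is the standard argument, and every ingredient is justified: expand $\|\theta^{(t+1)}-\theta^*\|_2^2$ after substituting the update, use $\nabla\ell(\theta^*)=0$, invoke co-coercivity $\langle\nabla\ell(\theta)-\nabla\ell(\theta^*),\theta-\theta^*\rangle\ge \frac{1}{L}\|\nabla\ell(\theta)-\nabla\ell(\theta^*)\|_2^2$ (valid because strong convexity in particular gives convexity) together with $\eta L\le 1$ to absorb the $\eta^2\|\nabla\ell(\theta^{(t)})\|_2^2$ term, and finish with gradient monotonicity $\langle\nabla\ell(\theta)-\nabla\ell(\theta^*),\theta-\theta^*\rangle\ge\mu\|\theta-\theta^*\|_2^2$; since $\mu\le L$ and $\eta\le 1/L$ give $1-\eta\mu\in[0,1)$, iterating the factor is legitimate. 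Your caveat about bridging the paper's secant-form Definition~\ref{def:strong_convex} (equivalently, convexity of $\ell-\frac{\mu}{2}\|\cdot\|_2^2$) and deriving the descent lemma and co-coercivity from the Lipschitz-gradient Definition~\ref{def:l_smooth} identifies exactly the intermediate lemmas that would need to be written out, and this is essentially the route the cited source takes, so your proof fills the citation with the expected argument rather than a genuinely different one. Your closing observation is also worth keeping: for the only instance the paper ever uses, $\ell(\theta)=0.5\|y-X\theta\|_2^2$ in Theorem~\ref{thm:optimz_polyn}, the spectral shortcut $\theta^{(t)}-\theta^*=(I-\eta X^\top X)^t(\theta^{(0)}-\theta^*)$ with $\|I-\eta X^\top X\|\le 1-\eta\mu$ proves the bound in one line, so the general co-coercivity machinery is only needed if one insists on the lemma at the stated level of generality.
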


\begin{fact}[Folklore]\label{fac:ex_to_poly_ineq}
    For any $n, T \in \N_+$, we have
    \begin{align*}
        (1 - \frac{1}{n})^T \leq e^{-T/n}.
    \end{align*}
\end{fact}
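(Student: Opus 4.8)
The statement to prove is the final displayed ``Fact'' (Folklore): for any $n, T \in \N_+$, we have $(1 - \frac{1}{n})^T \leq e^{-T/n}$. My plan is to reduce this to the single-variable inequality $1 + x \le e^x$, which holds for all $x \in \R$.

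First I would establish the base inequality $1 - \frac{1}{n} \le e^{-1/n}$ by invoking $1 + x \le e^{x}$ with $x = -\frac{1}{n}$. If a self-contained argument is wanted, this follows because $f(x) := e^x - (1+x)$ satisfies $f(0) = 0$, $f'(x) = e^x - 1$, which is negative for $x < 0$ and positive for $x > 0$, so $f$ has a global minimum value of $0$ at $x = 0$; hence $e^x \ge 1 + x$ everywhere, and in particular at $x = -1/n$. Note that for $n \ge 1$ the quantity $1 - \frac{1}{n}$ is nonnegative (it is $0$ when $n=1$ and positive otherwise), so the subsequent step of raising both sides to the $T$-th power is legitimate.

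Next I would raise both sides of $0 \le 1 - \frac{1}{n} \le e^{-1/n}$ to the power $T \in \N_+$. Since $t \mapsto t^T$ is monotonically nondecreasing on $[0,\infty)$ and both sides are nonnegative, this yields $(1 - \frac{1}{n})^T \le (e^{-1/n})^T = e^{-T/n}$, which is exactly the claim. The $n = 1$ case is handled uniformly: the left side is $0 \le e^{-T}$.

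There is essentially no obstacle here — this is a textbook folklore fact — so the only ``work'' is deciding how much detail to include. I would keep it to the two lines: apply $1 + x \le e^x$ at $x = -1/n$, then take $T$-th powers using monotonicity of $t \mapsto t^T$ on the nonnegative reals. If the paper prefers, one can instead cite this directly as standard and omit the derivation altogether, since it is used only as a convenience to convert the $(1-\eta\mu)^t$ factor from Lemma~\ref{lem:converge_strong_l_smooth} into the exponential form $\exp(-T/(2\kappa))$ appearing in Theorem~\ref{thm:main} (with $\eta = 1/L$, so $\eta\mu = \mu/L = 1/\kappa$, and the extra factor of $2$ coming from the square root when passing from $\|\theta^{(T)} - \theta^*\|_2^2$ to $\|\theta^{(T)} - \theta^*\|_2$ in Lemma~\ref{lem:loss}).
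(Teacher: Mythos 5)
Your proof is correct; the paper itself states this fact as folklore and gives no proof, and your argument (apply $1+x\le e^{x}$ at $x=-1/n$, note $1-\frac{1}{n}\ge 0$ for $n\in\N_+$, then raise both sides to the $T$-th power using monotonicity of $t\mapsto t^{T}$ on $[0,\infty)$) is exactly the standard derivation one would supply. Your closing remark about how the fact is used—converting $(1-1/\kappa)^{T}$ into $e^{-T/\kappa}$ in Theorem~\ref{thm:optimz_polyn}, with the factor $2$ in Theorem~\ref{thm:main} arising from taking the square root—also matches the paper's usage.
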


We now present a rigorous upper bound on the error magnitude of the gradient descent algorithm's output after $T$ iterations, elucidating the convergence properties of this optimization method in the context of linear vector generation.
\begin{theorem}
\label{thm:optimz_polyn}
If the following holds:
\begin{itemize}
    \item Let $T$ be the loop number.
    \item Let $n, T, t \in \mathbb N$.
    \item Let $X \in \R^{n \times d}$, $ y \in \R^{n}$, and $\theta \in \R^d$ be defined in Definition~\ref{def:grad}.
    \item Let the condition number $\kappa = \frac{\lambda_{\max}(X^\top X)}{\lambda_{\min}(X^\top X)}$.
    \item The step size $\eta = \frac{1}{L}$.
    \item Let $\mu$ and $L$ be defined in Definition~\ref{def:strong_convex} and Definition~\ref{def:l_smooth}.
    \item  For $\ell(\theta)$ defined in Definition~\ref{def:grad}, we have $\ell(\theta)$ is $L$-smooth and $\mu$ strong convex, where $L = \| X^\top X\|$ and $\mu = \lambda_{\min}(X^\top X)$.
    \item The initial point $\theta^{(0)}$ satisfies $\|\theta^{(0)} - \theta^*\|_2 \leq R$.
\end{itemize}
Then, we have
\begin{align*}
    \|\theta^* - \theta^{(T)} \|_2^2 \leq e^{-T/\kappa} R^2 . 
\end{align*}
\end{theorem}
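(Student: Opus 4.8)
The plan is to combine the two ingredients that the excerpt has already set up, namely the generic convergence rate for gradient descent on a strongly convex, smooth function (Lemma~\ref{lem:converge_strong_l_smooth}) and the identification of the smoothness and strong-convexity constants of the linear-regression loss (Lemma~\ref{lem:grad_l_ell} and Lemma~\ref{lem:mu_of_ell}). First I would verify that the hypotheses of Lemma~\ref{lem:converge_strong_l_smooth} are met: since $\ell(\theta)=0.5\|y-X\theta\|_2^2$ is $L$-smooth with $L=\|X^\top X\|=\lambda_{\max}(X^\top X)$ and $\mu$-strongly convex with $\mu=\lambda_{\min}(X^\top X)$, and the chosen stepsize $\eta=1/L$ lies in $(0,1/L]$, the lemma applies verbatim.

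Next I would instantiate the conclusion of Lemma~\ref{lem:converge_strong_l_smooth} with this particular $\eta$, $\mu$, $L$: the contraction factor becomes $1-\eta\mu = 1 - \mu/L = 1 - \lambda_{\min}(X^\top X)/\lambda_{\max}(X^\top X) = 1 - 1/\kappa$. Hence
\begin{align*}
    \|\theta^{(T)} - \theta^*\|_2^2 \le \Bigl(1 - \frac{1}{\kappa}\Bigr)^T \|\theta^{(0)} - \theta^*\|_2^2.
\end{align*}
Then I would apply Fact~\ref{fac:ex_to_poly_ineq}, which gives $(1-1/\kappa)^T \le e^{-T/\kappa}$ (strictly speaking the Fact is stated for integer $n$, but $\kappa \ge 1$ and the inequality $(1-x)^T \le e^{-Tx}$ holds for all $x\in[0,1]$, so this is harmless), and finally bound $\|\theta^{(0)}-\theta^*\|_2^2 \le R^2$ using the last hypothesis. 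Chaining these three inequalities yields $\|\theta^{(T)}-\theta^*\|_2^2 \le e^{-T/\kappa} R^2$, which is exactly the claimed bound.

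There is essentially no obstacle here: the theorem is a straightforward corollary obtained by plugging the explicit constants into an off-the-shelf convergence result. The only points requiring a moment's care are (i) checking that $\eta=1/L$ is an admissible stepsize for Lemma~\ref{lem:converge_strong_l_smooth}, which it is since the admissible range is $(0,1/L]$, and (ii) matching the exponential bound in Fact~\ref{fac:ex_to_poly_ineq} to the base $1-1/\kappa$ with $\kappa$ possibly non-integer — which is fine because the underlying elementary inequality $1+x\le e^x$ needs no integrality. I would therefore present the proof as a three-line deduction: invoke the lemma, substitute the constants to get the factor $1-1/\kappa$, then dominate by $e^{-T/\kappa}$ and use $\|\theta^{(0)}-\theta^*\|_2\le R$.
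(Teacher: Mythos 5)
Your proposal is correct and follows essentially the same route as the paper's own proof: invoke Lemma~\ref{lem:converge_strong_l_smooth} with $\eta=1/L$, identify $\mu/L=1/\kappa$ via Lemma~\ref{lem:grad_l_ell} and Lemma~\ref{lem:mu_of_ell}, dominate $(1-1/\kappa)^T$ by $e^{-T/\kappa}$ using Fact~\ref{fac:ex_to_poly_ineq}, and bound the initial distance by $R$. Your aside about $\kappa$ being non-integer is a fair (and slightly more careful) observation than the paper makes, but it does not change the argument.
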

\begin{proof}
First, we have
\begin{align*}
    \|\theta^{(t)} - \theta^* \|_2^2 \leq & ~ (1-\eta \mu)^{t}     \|\theta^{(0)} - \theta^*\|_2^2\\
    \leq & ~(1-\frac{\mu}{L})^{t} \|\theta^{(0)} -    \theta^*\|_2^2
\end{align*}
where the first step follows from Lemma~\ref{lem:converge_strong_l_smooth}, the second step follows from we choose $\eta^{(t)} = \frac{1}{L}$. Then consider the term $\frac{\mu}{L}$, we have
\begin{align*}
   \frac{\mu}{L} = & ~\frac{\lambda_{\min}(X^\top X)}{\| X^\top X\|} = \frac{1}{\kappa}
\end{align*}
where the first step follows from Lemma~\ref{lem:grad_l_ell} and Lemma~\ref{lem:mu_of_ell}, the second step follows the definition of condition number $\kappa = \frac{\lambda_{\max}(X^\top X)}{\lambda_{\min}(X^\top X)}$.

Then, substituting this back, we have
\begin{align*}
    \|\theta^{(T)} - \theta^* \|_2^2 \leq & ~(1-\frac{1}{\kappa})^{T} \|\theta^{(0)} - \theta^*\|_2^2\\
    = & ~ (1-1/\kappa)^{\kappa \cdot T/\kappa} \|\theta^{(0)} - \theta^*\|_2^2 \\
    \leq & ~ e^{-T/\kappa} \|\theta^{(0)} - \theta^*\|_2^2  \\
    \leq & ~ e^{-T/\kappa} R^2
\end{align*}
where the first step follows from basic algebra, the second step follows from $(1-1/\kappa)^\kappa < e^{-1}$ (Fact~\ref{fac:ex_to_poly_ineq}), and the last step follows from $\|\theta^{(0)} - \theta^*\|_2 \leq R$.
\end{proof}

Theorem~\ref{thm:optimz_polyn} tells us that GD can well solve linear regression tasks. In particular, when the input data has a good condition number, the approximation error will exponentially decay to $0$. We use the above insights in the proof of our main results (Theorem~\ref{thm:main}).

\section{EXPERIMENTS}\label{sec:expr}
In this section, we aim to verify our theory by evaluating the convergence behavior of gradient descent for linear vector generation. We designed our experiment to examine the impact of varying sample sizes on convergence rates while keeping the feature dimension fixed. Our results demonstrate that empirical convergence rates consistently outperform theoretical upper bounds across all sample sizes, with significant improvement in convergence speed as the condition number decreases, validating our theoretical predictions.

\begin{figure}[!ht]
\centering
\includegraphics[width=0.5\textwidth]{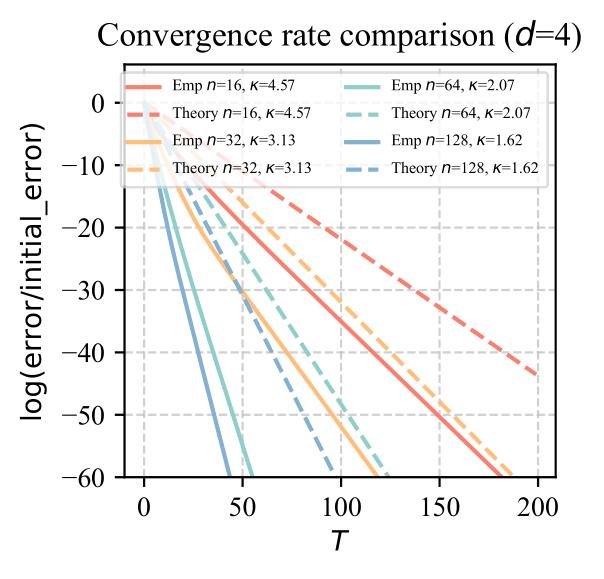}
\caption{
The convergence rate comparison for gradient descent in linear vector generation with a fixed dimension $d=4$ and varying sample sizes $n \in \{ 16, 32, 64, 128 \}$ and their corresponding condition number $\kappa$. 
The `Emp' means the empirical error of our experiments. The `Theory' means the theoretical bound in Theorem~\ref{thm:main}.
The $y$-axis is the logarithm of normalized error and the $x$-axis is the number of loops $T$.
Both empirical (solid lines) and theoretical (dashed lines) results are presented for each $n$. The plot demonstrates that as the sample size $n$ increases, the condition number will decrease, so the convergence rate improves. Thus, with larger $n$ values, there will be a steeper slope and faster convergence to the optimal solution.
}
\label{fig:convergence}
\end{figure}

\subsection{Experiment Setup} 
In this experiment, we aimed to investigate the convergence behavior of multi-step gradient descent for Linear Looped Transformer (Definition~\ref{def:loop}) in-context learning the linear vector generation task (Definition~\ref{def:icl_task}). 
We randomly draw each entry of $X \in \mathbb{R}^{n \times d}$ from standard Gaussian distribution, $\mathcal{N}(0,1)$, and response variables $y = X\theta^*$, where $\theta^* \in \mathbb{R}^d$ was randomly chosen. Our experiments focused on scenarios with $d$ fixed at $4$ and $n$ varying in $\{16, 32, 64, 128\}$. For each $(n,d)$ combination, we implemented gradient descent with $T = 200$ iterations and learning rate $\eta = 1/L$, where $L = \|X^\top X\|$. To ensure statistical robustness, we conducted 10 independent trials for each configuration. Convergence was measured by tracking $\|\theta^{(t)} - \theta^*\|_2^2$, where $\theta^*$ is the optimal least squares solution. To facilitate comparison across different problem sizes, we normalized error plots by the initial error, plotting $\log(\|\theta^{(t)} - \theta^*\|_2^2 / \|\theta^{(0)} - \theta^*\|_2^2)$ against $t$. This normalization enabled a clear comparison of relative decay rates, irrespective of initial error magnitudes, thus providing insights into the impact of condition number on gradient descent convergence in the linear vector generation task.

\subsection{Result Interpretation}
Our experiment investigates the convergence behavior of gradient descent for linear vector generation with varying sample sizes $n$ and a fixed feature dimension $d=4$. Figure~\ref{fig:convergence} illustrates the convergence rates for different $n$ values $\{ 16, 32, 64, 128 \}$, comparing empirical results with theoretical bounds in Theorem~\ref{thm:main}.
A key aspect of this experiment is the condition number $\kappa$, which decreases as the number of examples increases. The average $\kappa$ values for the different sample sizes are $\{ 4.57, 3.13, 2.07, 1.62 \}$, corresponding to $n = \{ 16, 32, 64, 128 \}$ respectively. This inverse relationship between $n$ and $\kappa$ is noteworthy, as it significantly influences the convergence rates.
The results demonstrate that as the sample size $n$ increases, the convergence rate improves substantially. This is evident from the steeper slopes of both empirical and theoretical lines for larger $n$ values. Importantly, the empirical convergence rates consistently outperform the theoretical upper bounds across all sample sizes, with the gap between empirical and theoretical performance narrowing as $n$ increases.
This observation aligns with our theoretical expectations in Theorem~\ref{thm:main} and highlights the crucial role of the condition number in determining convergence behavior.

\section{CONCLUSION}\label{sec:conclusion}
In this work, we have demonstrated that linear looped Transformers can efficiently implement multi-step gradient descent for in-context learning, requiring only a reasonable number of examples when input data is well-conditioned. This finding relieves the previous assumptions of an exponential number of in-context examples and offers new insights into the capabilities of Transformer architectures. Our theoretical analysis and preliminary experiments pave the way for more efficient inference algorithms in large language models and open avenues for future research in this domain.

\ifdefined\isarxiv
\section*{Acknowledgments}
Research is partially supported by the National Science Foundation (NSF) Grants 2023239-DMS, CCF-2046710, and Air Force Grant FA9550-18-1-0166.
\bibliographystyle{alpha}
\bibliography{ref}
\else

\section*{Acknowledgments}
Research is partially supported by the National Science Foundation (NSF) Grants 2023239-DMS, CCF-2046710, and Air Force Grant FA9550-18-1-0166.
\bibliography{ref}
\bibliographystyle{plainnat}
\input{checklist}
\fi

\newpage
\onecolumn
% \aistatstitle{BYPASSING THE EXPONENTIAL DEPENDENCY IN MULTI-STEP GRADIENT DESCENT OF IN-CONTEXT LEARNING}
\appendix

% \input{9_app_bak}

%%%% Cut-line between first 10 pages and appendix

%%% some writing rules

%% Writing rule for creating tags.
%% Tags :
%% Theorem    \ref{thm:bla_bla}
%% Lemma      \ref{lem:bla_bla}
%% Claim      \ref{cla:bla_bla}
%% Corollary  \ref{cor:bla_bla}
%% Fact       \ref{fac:bla_bla}
%% Definition \ref{def:bla_bla}
%% Section    \ref{sec:bla_bla}
%% Subsection \ref{sub:bla_bla}
%% Equation   \ref{eq:bla_bla}

\end{document}